\documentclass{article}

\usepackage[preprint]{neurips_2026}

\usepackage[utf8]{inputenc}
\usepackage[T1]{fontenc}
\usepackage{hyperref}
\usepackage{url}
\usepackage{booktabs}
\usepackage{subcaption}
\usepackage{amsfonts}
\usepackage{nicefrac}
\usepackage{microtype}
\usepackage{xcolor}

\usepackage{amsmath, amssymb, amsthm, mathtools}
\usepackage{graphicx}
\graphicspath{{figures/}}
\usepackage{enumitem}
\usepackage{bm}

\newtheorem{proposition}{Proposition}

\newtheorem{corollary}{Corollary}

\newcommand{\R}{\mathbb{R}}
\newcommand{\E}{\mathbb{E}}

\newcommand{\z}{\bm{z}}
\newcommand{\vv}{\bm{v}}
\newcommand{\zp}{\z^{\mathrm{prog}}}
\newcommand{\zc}{\z^{\mathrm{cont}}}

\title{Subspace-Decomposed JEPAs: Disentangling Progression and Content in Latent World Models}

\author{%
  Lucas Thil \\
  LIX, École Polytechnique \\
      IRT SystemX \\
  Palaiseau, France \\
  \texttt{thil@lix.polytechnique.fr} \\
  \And
  Jesse Read \\
  LIX, École Polytechnique \\
  Palaiseau, France \\
  \And
  Rim Kaddah \\
  IRT SystemX \\
  Palaiseau, France \\
  \And
  Guillaume Doquet \\
  Safran Tech \\
  Chateaufort, France \\
}

\begin{document}

\maketitle

\begin{abstract}
Joint-Embedding Predictive Architectures (JEPAs) learn compact latent world models by predicting future embeddings, but no single coordinate of the latent is designated to encode task progression. We carve the JEPA latent into two orthogonal subspaces with disjoint roles: a low-dimensional \emph{progression subspace} $\z^\text{prog} \in \R^k$ shaped by a cosine-margin triplet loss, and a high-dimensional \emph{content subspace} $\z^\text{cont} \in \R^{D-k}$ regularised by the existing SIGReg objective of \textsc{LeWM}. We prove that the two anti-collapse forces act on disjoint coordinates, so they compose additively rather than competing on the same dimensions. Our method, SD-JEPA improves over the \textsc{LeWM} baseline on the majority of its control benchmarks at matched compute, and outperforms the strongest non-\textsc{LeWM} JEPA baseline on Push-T; a subspace-ablation falsifier confirms the split is the load-bearing ingredient. Beyond planning, the resulting 1-D angular progression coordinate $\theta_t = \mathrm{atan2}(\z^{\text{prog}}_2, \z^{\text{prog}}_1)$ functions as a scene-aware compass on the latent. It advances with task progress, regresses when the agent backtracks, and under controlled perturbations both spikes and relocalises to a semantically appropriate new task-phase sector, separating the \emph{moment} of surprise from its \emph{meaning} in a way that prediction-error scalars cannot. Three quantitative tests back this up: $|\Delta\theta_t|$ outperforms the standard latent-prediction-error surprise (z-MSE) at localising semantic events on $40$ held-out cube episodes by up to $+0.18$ pooled AUROC ($97.5\%$ per-episode win rate at $\pm 1$-step tolerance); a within-episode linear probe across all four environments ($40$ episodes per env) shows the $8$-dimensional progression subspace ($4.2\%$ of the latent) explains $72$--$95\%$ of task-progress variance, with the largest probe-vs-clock gap on Reacher tracking the only env with a robust $\zp$-aware planning lift. Code at \url{https://github.com/LucasStill/SD-JEPA}
\end{abstract}

\begin{figure}[h]
\centering
\includegraphics[width=\linewidth]{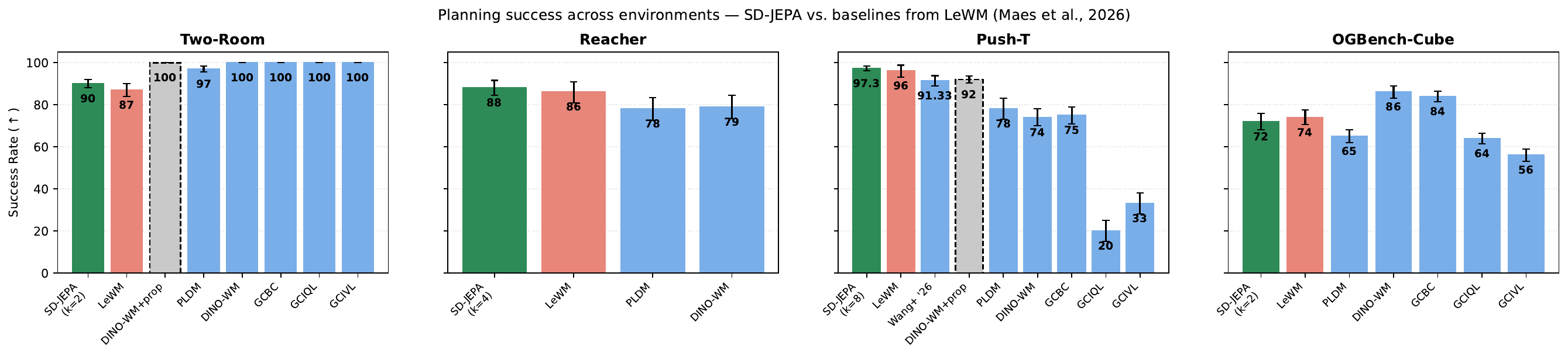}
\caption{\textbf{Planning success rate (\%) across the four \textsc{LeWM}
benchmark environments.} Baselines reproduced from \citet{maes2026lewm, wang2026straightening, tassa2018deepmind, park2024ogbench, sobalstress, zhou2024dino}
Fig.~6. SD-JEPA (sea-green, leftmost) reports the multi-seed mean at the
optimal $k_{\mathrm{prog}}$ per environment, with sample standard deviation
from multi-seed runs. SD-JEPA improves over
\textsc{LeWM} on three of the four environments at matched 10-epoch compute
(Two-Room $+3$, Reacher $+2$, Push-T $+1.3$); cube $-2$ at $k{=}2$.}
\label{fig:headline-comparison}
\end{figure}


\section{Introduction}
\label{sec:intro}

World models aim at predicting the consequences of actions in a compact latent space so that agents can plan, imagine, and generalize. Joint-Embedding Predictive Architectures (JEPAs) \citep{lecun2022path,assran2023ijepa,bardes2024vjepa, Assran_Bardes_Fan_Garrido_Howes_Mojtaba_Komeili_Muckley_Rizvi_Roberts_et_al._2025} pursue this in feature space, forgoing pixel-level reconstruction. The recent \textsc{LeWM} method \citep{maes2026lewm} establishes an end-to-end pixel JEPA with a two-term loss, next-embedding MSE plus a Sketch Isotropic Gaussian Regularizer (SIGReg) \cite{Balestriero_LeCun_2025} on the marginal latent distribution, and demonstrates competitive planning with $\sim$15M parameters trainable on a single GPU. 

A recurring observation across \textsc{LeWM} and adjacent work is that these latents capture \emph{slow features} and exhibit \emph{emergent temporal straightening} \citep{henaff2019straightening,sobal2022slow}, but they do not carry an \emph{interpretable} notion of progression: there is no axis along the latent whose value corresponds to ``how far along'' the agent is in an episode, nor any geometric prior that different episodes of the same task should share a common progression manifold.

A natural way to add a designated progression axis is to allocate a low-dimensional \emph{subspace} of the latent to it, and shape that subspace with a temporal-ordering objective. Cosine-margin triplet losses are a well-established mechanism for this purpose in temporal contrastive learning \citep{dwibedi2019tcc, schneider2023cebra}: they pull embeddings of temporally adjacent frames together and push apart frames that are far in time, thereby imposing a smooth ordering on the resulting representation. Independently, recent work on temporal straightening for latent planning \citep{wang2026straightening} demonstrates that geometric regularisers on latent trajectories can substantially improve gradient-based planners. What has not been studied is whether such structured-progression objectives can be combined with the marginal-distribution anti-collapse mechanisms used by end-to-end JEPAs (specifically, SIGReg in \textsc{LeWM}) without the two regularisers interfering.

This paper asks: \emph{can we add a structured progression coordinate to a \textsc{LeWM}-style JEPA latent without breaking its SIGReg anti-collapse guarantee, and does the resulting architecture improve planning?}

\paragraph{Contributions.}
\begin{enumerate}[leftmargin=*]
\item We introduce SD-JEPA, an extension of \textsc{LeWM} that splits the JEPA latent into a low-dimensional \emph{progression subspace} $\zp$ and a high-dimensional \emph{content subspace} $\zc$, shaped respectively by a cosine-margin triplet loss and SIGReg.
\item We prove that the two anti-collapse forces (SIGReg on $\zc$, triplet on $\zp$) act on disjoint coordinates with disjoint gradient supports, so they compose additively rather than competing on the same dimensions; we connect the triplet loss and SIGReg via the sample/dimension-contrastive duality of \citet{garrido2023duality}, and validate the decomposition empirically with a subspace-ablation falsifier.
\item We propose a planning cost that decomposes into a content term ($\zc$-MSE) and an optional angular term on $\zp$, and show that SD-JEPA matches or improves on \textsc{LeWM} on the majority of its control benchmarks at matched compute.
\item We characterise the resulting 1-D angular progression coordinate $\theta_t = \mathrm{atan2}(\zp_2, \zp_1)$ as a scene-aware compass on the latent: it tracks task progress, regresses under genuine task regression, and under controlled perturbations decouples the \emph{moment} of surprise from its post-perturbation \emph{re-localisation}.
\end{enumerate}

\section{Background}
\label{sec:background}

\subsection{Notation and setting}
We follow the \textsc{LeWM} setting. An offline dataset of trajectories $\{(o_{1:T_u}, a_{1:T_u})\}_{u=1}^{U}$ is given, with pixel observations $o_t$ and action labels $a_t$. An encoder $f_\theta$ maps observations to latents $\z_t = f_\theta(o_t) \in \R^D$, and a predictor $p_\phi$ models dynamics autoregressively in latent space: $\hat{\z}_{t+1} = p_\phi(\z_{t-H+1:t}, a_t)$, where $H$ is the history length. Planning is performed by CEM over action sequences to minimize a latent-space cost to a goal embedding $\z_g = f_\theta(o_g)$.

\subsection{LeWM: next-embedding prediction with SIGReg}
\textsc{LeWM} trains the encoder and predictor jointly with
\begin{equation}
\mathcal{L}_{\text{LeWM}} = \underbrace{\|\hat{\z}_{t+1} - \z_{t+1}\|_2^2}_{\mathcal{L}_{\text{pred}}} \;+\; \lambda \,\mathrm{SIGReg}(\z),
\label{eq:lewm}
\end{equation}
where $\mathrm{SIGReg}$ enforces that the marginal distribution of $\z$ is isotropic Gaussian. Using the Cramér--Wold theorem, SIGReg aggregates the univariate Epps--Pulley normality statistic over $M$ random unit directions $u^{(m)} \in \mathbb{S}^{D-1}$:
\begin{equation}
\mathrm{SIGReg}(Z) = \frac{1}{M} \sum_{m=1}^{M} T\!\left(Zu^{(m)}\right),
\label{eq:sigreg}
\end{equation}
with $T(\cdot)$ the Epps--Pulley test statistic. The key property of \eqref{eq:sigreg} is that it constrains only the \emph{marginal} distribution $p(\z)$; the conditional $p(\z_{t+1}\mid\z_t)$ is unaffected.

\subsection{Duality of contrastive and covariance-based regularizers}
\citet{garrido2023duality} show that the sample-contrastive criterion $\mathcal{L}_c = \|K^\top K - \mathrm{diag}(K^\top K)\|_F^2$ and the dimension-contrastive criterion $\mathcal{L}_{nc} = \|KK^\top - \mathrm{diag}(KK^\top)\|_F^2$ (with $K$ the embedding matrix) are equivalent up to the row- and column-norms of $K$:
\begin{equation}
\mathcal{L}_{nc} + \sum_{j=1}^{M} \|K_{j,\cdot}\|_2^4 = \mathcal{L}_c + \sum_{i=1}^{N} \|K_{\cdot,i}\|_2^4.
\label{eq:duality}
\end{equation}
Under double-$L_2$-normalization of rows and columns, $\mathcal{L}_{nc} = \mathcal{L}_c + N - M$. SIGReg and the cosine-triplet both belong to this broader variance-repulsion family, a fact we exploit in \S\ref{sec:theory}.

\section{Method: SD-JEPA}
\label{sec:method}

\subsection{Subspace decomposition of the latent}
We split the JEPA latent into a \emph{progression subspace} and a \emph{content subspace}:
\begin{equation}
\z_t = \underbrace{P\, \zp_t}_{\in \R^D} \;+\; \underbrace{Q\, \zc_t}_{\in \R^D}, \qquad \zp_t \in \R^k,\ \zc_t \in \R^{D-k},
\end{equation}
where $P \in \R^{D\times k}$ and $Q \in \R^{D\times(D-k)}$ are fixed orthogonal injections with $P^\top P = I_k$, $Q^\top Q = I_{D-k}$, $P^\top Q = 0$. In practice we take the canonical split $P = [I_k;\, 0]$, $Q = [0;\, I_{D-k}]$, so that $\zp_t$ and $\zc_t$ are simply the first $k$ and last $D-k$ coordinates of $\z_t$. We use $k = 2$ by default.

\subsection{Four-term training objective}
SD-JEPA is trained with
\begin{equation}
\mathcal{L}_{\text{SD}} \;=\; \mathcal{L}_{\text{pred}}(\z)
\;+\; \lambda_S\,\mathrm{SIGReg}(\zc)
\;+\; \lambda_T\,\mathcal{L}_{\text{trip}}(\zp)
\;+\; \lambda_{\mathrm{str}}\,\mathcal{L}_{\text{straight}}(\z)
\label{eq:objective}
\end{equation}
with:
\begin{itemize}[leftmargin=*]
\item $\mathcal{L}_{\text{pred}}(\z) = \|\hat{\z}_{t+1} - \z_{t+1}\|_2^2$ on the full latent (unchanged from \textsc{LeWM}).
\item $\mathrm{SIGReg}(\zc)$: SIGReg (\eqref{eq:sigreg}) applied \emph{only} to the content subspace, using random unit directions sampled in $\R^{D-k}$.
\item $\mathcal{L}_{\text{trip}}(\zp)$: a cosine-margin triplet loss on $\zp$ that orders embeddings by their position within a trajectory: positives are sampled within a temporal window $\vartheta_t$ of an anchor, negatives outside the window or from a different trajectory.
\item $\mathcal{L}_{\text{straight}}(\z) = -\E_t\!\left[\cos(\vv_t, \vv_{t+1})\right]$ with $\vv_t = \z_{t+1} - \z_t$: explicit temporal straightening (see \S\ref{sec:straight-theory}).
\end{itemize}
The LeWM base is recovered as $\lambda_T = \lambda_P = \lambda_{\mathrm{str}} = 0$ and applying $\mathrm{SIGReg}$ to the full $\z$ instead of $\zc$.

\subsection{Latent-space indicators as predictor conditioning}
\label{sec:theta-r-cond}
From $\zp_t$ we compute the angular and radial readouts
\begin{equation}
\theta_t = \mathrm{atan2}(\zp_{t,2},\, \zp_{t,1}), \qquad r_t = \|\zp_t\|_2.
\end{equation}
These are scalar, scale- and rotation-aware quantities that summarize \emph{where on the progression manifold} the current state sits. We expose them to the autoregressive predictor as additional conditioning alongside the action embedding:
\begin{equation}
\hat{\z}_{t+1} = p_\phi\!\left(\z_{t-H+1:t},\; a_t,\; (\theta_t, r_t)\right).
\end{equation}
Concretely, $(\theta_t, r_t)$ (with $\theta_t$ encoded as $(\sin\theta_t, \cos\theta_t)$ to avoid wrap-around discontinuities) is embedded to the predictor's hidden dimension and summed with the action embedding before AdaLN-zero conditioning. This gives the predictor an explicit \emph{progression compass}, analogous in spirit to sinusoidal positional encodings in attention: no new information, but better-structured information.

\subsection{Planning with an angular goal-matching cost}
\label{sec:planning}
In \textsc{LeWM}, planning minimizes $\|\hat{\z}_H - \z_g\|_2^2$ at the end of a rollout. We replace this with a decomposed cost:
\begin{equation}
C(\hat{\z}_H) = \underbrace{\|\hat{\zc}_H - \zc_g\|_2^2}_{\text{content match}} \;+\; \gamma\,\underbrace{\bigl(1 - \cos(\hat{\theta}_H, \theta_g)\bigr)}_{\text{angular progression match}} \;+\; \delta\,\underbrace{(\hat{r}_H - r_g)^2}_{\text{radial match}}.
\label{eq:plancost}
\end{equation}
The angular and radial terms make the progression match scale-invariant and independent of the ambient content geometry. The receding-horizon Cross-Entropy Method (CEM) solver is otherwise identical to \textsc{LeWM}: at each replanning step, $300$ candidate action sequences are sampled, the top-$30$ retained as elites to update the sampling Gaussian, and we iterate $30$ times on Push-T and $10$ times on the other environments.

\section{Theoretical analysis}
\label{sec:theory}

We state the two propositions that establish why the subspace split composes additively rather than competing on the same dimensions. Full proofs and three further results (consistency with emergent straightening, identifiability of the progression axis, when the split fails) are deferred to App.~\ref{app:theory}.

\begin{proposition}[Disjoint gradient supports]
\label{prop:disjoint}
Let $\z \in \R^D$ be written as $\z = P\zp + Q\zc$ with $P^\top P = I_k$, $Q^\top Q = I_{D-k}$, $P^\top Q = 0$ (\S\ref{sec:method}). Let $\mathrm{SIGReg}$ be applied on $\zc$ and any function $\mathcal{L}$ on $\zp$ alone. Then $\mathrm{span}(\nabla_{\z}\,\mathrm{SIGReg}(Q^\top \z)) \subseteq \mathrm{col}(Q)$, $\mathrm{span}(\nabla_{\z}\,\mathcal{L}(P^\top \z)) \subseteq \mathrm{col}(P)$, and these two subspaces are orthogonal.
\end{proposition}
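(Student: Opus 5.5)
The plan is a direct chain-rule computation, carried out separately for the two losses, followed by an orthogonality check that uses only $P^\top Q = 0$.

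\textbf{Step 1: SIGReg gradient.} Write $g(\z) := \mathrm{SIGReg}(Q^\top \z)$. For a batch, $Z \in \R^{N\times D}$ and the content block is $ZQ$. The regulariser is a function $S(\zc)$ composed with the linear map $\z \mapsto Q^\top \z$. By the chain rule,
\begin{equation*}
\nabla_{\z}\, g(\z) = Q\, \nabla_{\zc} S(\zc)\big|_{\zc = Q^\top \z},
\end{equation*}
applied row-wise in the batched case. Every gradient vector is therefore $Q$ times some vector in $\R^{D-k}$, so it lies in $\mathrm{col}(Q)$. The span of all such gradients, over samples and parameter values, is again contained in $\mathrm{col}(Q)$.

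Two points need checking here. First, the random directions $u^{(m)}$ are drawn in $\R^{D-k}$ and do not depend on $\z$, so they enter as constants. Second, the Epps--Pulley statistic must be differentiable. It is a smooth empirical characteristic-function integral, so this holds. If one prefers to avoid assuming smoothness, the same argument goes through with subgradients, because the chain rule for a linear inner map preserves the range.

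\textbf{Step 2: progression-loss gradient.} The identical argument with $P$ in place of $Q$ gives
\begin{equation*}
\nabla_{\z}\, \mathcal{L}(P^\top \z) = P\, \nabla \mathcal{L}(\zp)\big|_{\zp = P^\top \z} \in \mathrm{col}(P).
\end{equation*}
The triplet loss couples anchor, positive and negative samples, but each of its per-sample gradients still factors through $P^\top$ in this way.

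\textbf{Step 3: orthogonality.} Take any $Pa \in \mathrm{col}(P)$ and $Qb \in \mathrm{col}(Q)$. Then
\begin{equation*}
\langle Pa, Qb\rangle = a^\top P^\top Q\, b = 0,
\end{equation*}
so $\mathrm{col}(P) \perp \mathrm{col}(Q)$. Since each gradient span is contained in its respective column space, the two spans are orthogonal as well. As a remark, dimension counting also shows that $\mathrm{col}(P) \oplus \mathrm{col}(Q) = \R^D$.

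\textbf{Main obstacle.} No step here is genuinely hard. The only delicate point is stating the statement's scope precisely. It concerns gradients with respect to $\z$, not with respect to the encoder parameters $\theta$. Pulled back through the shared Jacobian $\partial \z / \partial \theta$, the two terms can still interact in parameter space.

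I would flag this explicitly. The additivity claim holds at the level of latent coordinates, in the sense that the two gradients never push along the same latent directions and the total gradient decomposes orthogonally. It is not a claim of non-interference in parameter space.
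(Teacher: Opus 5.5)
Your proposal is correct and is essentially the paper's proof: the chain-rule factorization $\nabla_{\z}\,\mathrm{SIGReg}(Q^\top\z) = Q\,\nabla_{\zc}\mathrm{SIGReg}(\zc) \in \mathrm{col}(Q)$, the identical argument through $P$, and orthogonality from $P^\top Q = 0$. Your added points are sound, and the last one matches the caveat the paper itself gives: the batched row-wise gradients, the differentiability of the Epps--Pulley statistic, and the restriction of the claim to latent rather than encoder-parameter gradients.
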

\noindent\emph{Proof sketch.} $\mathrm{SIGReg}(Q^\top\z)$ depends on $\z$ only through $Q^\top\z$, so by the chain rule $\nabla_{\z}\, \mathrm{SIGReg}(Q^\top\z) = Q \,\nabla_{\zc}\mathrm{SIGReg}(\zc)$, which lies in $\mathrm{col}(Q)$. Identically for $\mathcal{L}$ and $P$. Orthogonality follows from $P^\top Q = 0$. \hfill$\square$

\begin{proposition}[No double-counting on orthogonal subspaces]
\label{prop:noduplication}
Let $\mathcal{L}_{\text{trip}}$ act on $\zp$ and $\mathrm{SIGReg}$ act on $\zc$. Under the split, their anti-collapse gradients act on orthogonal coordinates: removing either term cannot be compensated by increasing the other (the other has no gradient in the affected directions).
\end{proposition}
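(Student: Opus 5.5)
The argument reduces to Proposition~\ref{prop:disjoint}, applied with $\mathcal{L}=\mathcal{L}_{\text{trip}}$. The first step is to make the informal phrase ``cannot be compensated'' precise. Consider the total objective \eqref{eq:objective} viewed as a function of the latent $\z$, and decompose every gradient into its $\mathrm{col}(P)$ and $\mathrm{col}(Q)$ components using the orthogonal projectors $PP^\top$ and $QQ^\top$. Note that $PP^\top + QQ^\top = I_D$ because $P,Q$ together form an orthonormal basis. The claim to prove is then the following. For every $\lambda_S,\lambda_T\ge 0$,
\[
PP^\top \nabla_{\z}\bigl(\lambda_S\,\mathrm{SIGReg}(Q^\top\z)\bigr)=0
\quad\text{and}\quad
QQ^\top \nabla_{\z}\bigl(\lambda_T\,\mathcal{L}_{\text{trip}}(P^\top\z)\bigr)=0 .
\]
Consequently, the combined anti-collapse gradient in the $\mathrm{col}(P)$ directions equals $\lambda_T$ times the triplet gradient and does not depend on $\lambda_S$; symmetrically, its $\mathrm{col}(Q)$ component depends only on $\lambda_S$.

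To establish this, I would invoke Proposition~\ref{prop:disjoint}, which gives $\nabla_{\z}\mathrm{SIGReg}(Q^\top\z)=Q\,\nabla_{\zc}\mathrm{SIGReg}$ and $\nabla_{\z}\mathcal{L}_{\text{trip}}(P^\top\z)=P\,\nabla_{\zp}\mathcal{L}_{\text{trip}}$. Applying $PP^\top$ to the first expression yields $P(P^\top Q)\nabla_{\zc}\mathrm{SIGReg}=0$, and applying $QQ^\top$ to the second vanishes for the same reason. This holds identically in $\lambda_S,\lambda_T$. Next comes the ``removal'' statement. Setting $\lambda_T=0$ annihilates the $\mathrm{col}(P)$ component of the regulariser gradient for every value of $\lambda_S$. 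Any collapse mode confined to $\zp$, for instance $\zp_t\equiv c$, is therefore a stationary direction of $\lambda_S\,\mathrm{SIGReg}$, since SIGReg is constant along it. Increasing $\lambda_S$ only rescales a vector that lives in $\mathrm{col}(Q)$. The symmetric argument handles removal of SIGReg: collapse of $\zc$ receives no restoring force from the triplet term.

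The step I expect to be the real obstacle is the shared term $\mathcal{L}_{\text{pred}}(\z)$, together with $\mathcal{L}_{\text{straight}}(\z)$. Both act on the full latent, so through them the encoder parameters couple the two subspaces. At the level of encoder parameters $\theta$, the chain rule $\nabla_\theta = J_\theta^\top\nabla_{\z}$ mixes the two components, because one shared Jacobian is applied to both. I would handle this by stating the proposition explicitly at the level of latent-space gradients of the anti-collapse terms, which is what ``act on orthogonal coordinates'' refers to. I would then add a remark that the parameter-space claim needs the additional observation that, for any fixed $\theta$, $J_\theta^\top Q\,g$ and $J_\theta^\top P\,h$ are the images of orthogonal latent directions. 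These images are not themselves orthogonal in parameter space. The precise content of ``cannot be compensated'' is therefore that SIGReg supplies no restoring signal for perturbations of $\zp$, since its directional derivative $\langle\nabla_{\z}\mathrm{SIGReg},P\bm{u}\rangle$ vanishes for all $\bm{u}$. The triplet loss likewise supplies none for perturbations of $\zc$. This directional-derivative formulation is the one I would commit to.
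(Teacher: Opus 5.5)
Your proposal is correct and follows the paper's own proof, which simply applies Proposition~\ref{prop:disjoint} with $\mathcal{L}=\mathcal{L}_{\text{trip}}$; your projector computation ($P^\top Q = 0$, so $PP^\top$ kills the SIGReg gradient and $QQ^\top$ kills the triplet gradient) just makes the ``cannot be compensated'' clause precise. Restricting the claim to latent-space gradients, and noting that the shared encoder Jacobian can still couple the two terms in parameter space, matches the caveat the paper itself states after the proposition.
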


These two results together formalise the architectural intuition: triplet on $\zp$ and SIGReg on $\zc$ are \emph{complementary rather than redundant} despite belonging to the same variance-repulsion family in the sense of \citet{garrido2023duality}. The empirical falsifier (A2\_full $=$ A0, Tab.~\ref{tab:subspace-ablation}) confirms the proposition matters in practice: when the split is removed and triplet acts on the full latent, the gain disappears. Disjointness is at the latent level; the encoder's parameter gradients are still the sum of both terms via the chain rule, so encoder-level conflict is not formally precluded.

\section{Experiments}
\label{sec:experiments}

\subsection{Setup}
We evaluate SD-JEPA on the four control benchmarks of \citet{maes2026lewm}: Two-Room (2-D navigation), Reacher (DM-Control), Push-T (2-D manipulation), and OGBench-Cube (3-D manipulation). All evaluations use the paper's protocol, $50$-step horizon, $25$-step goal offset, CEM solver with $300$ candidates / $30$ iterations on Push-T and $10$ iterations elsewhere, planning horizon $5$ at frame-skip $5$. We train each model for 10 epochs at batch size $128$, matching the LeWM training budget; encoder and predictor architectures are inherited from LeWM (ViT-tiny encoder, $6$-layer transformer predictor, ${\sim}18$M parameters total). Multi-seed runs use seeds $\{0, 42, 3072\}$. Implementation details, dataset paths, hyperparameters, and a glossary of the ablation rung names (A0, A2, A2\_full, A4, A5, A6) used throughout this section are in App.~\ref{app:impl} and Tab.~\ref{tab:rung-legend}.

\subsection{Main results}
\label{sec:main-results}

Figure~\ref{fig:headline-comparison} shows planning success rate per environment for SD-JEPA against the published LeWM numbers and the cross-method baselines reported by \citet{maes2026lewm}. \textbf{SD-JEPA improves over LeWM on three of the four environments at matched 10-epoch compute}:
\begin{itemize}[leftmargin=*,topsep=2pt,itemsep=1pt]
\item \textbf{Reacher}: $+2$ over LeWM ($86 \to 88$ at $k_\text{prog}{=}4$, $3$-seed mean across both planning costs); $+12$ over our A0 rerun ($76$).
\item \textbf{Two-Room}: $+3$ over LeWM ($87 \to 90$ at $k_\text{prog}{=}2$, $3$-seed mean).
\item \textbf{Push-T}: $+1.3$ over LeWM ($96 \to 97.3$ at $k_\text{prog}{=}8$, $3$-seed mean),$+5$ over the strongest non-LeWM baseline (DINO-WM with proprioception, $92$).
\item \textbf{OGBench-Cube}: $-2$ vs LeWM at $k_\text{prog}{=}2$ ($72$ vs $74$, $3$-seed mean); the $k_\text{prog}{=}\{4, 8\}$ sweep does not close the gap (cube prefers $k{=}2$ in our 10-epoch regime).
\end{itemize}
The optimal $k_\text{prog}$ is task-dependent (\S\ref{sec:kprog-sweep}); the architectural framework is robust across $k \in \{2, 4, 8\}$. Table~\ref{tab:comparison} reports cross-method numbers.

\begin{table}[h]
\centering
\small
\begin{tabular}{lccccc}
\toprule
Method                              & Two-Room & Reacher & Push-T & OGB-Cube & best $k$ \\
\midrule
\textsc{LeWM} \citep{maes2026lewm}  & 87       & 86      & 96     & 74       &,      \\
SD-JEPA, $k_\text{prog}{=}2$        & 90 (n=3) & 84 (n=3)& 94 (n=3)& 72 (n=3)&,      \\
\textbf{SD-JEPA, best $k$ (ours)}   & \textbf{90}     & \textbf{88}     & \textbf{97.3} & 72             & 2 / 4 / 8 / 2 \\
\midrule
$\Delta$ vs.\ \textsc{LeWM}         & $+3$     & $+2$           & $+1.3$ & $-2$ &,      \\
\bottomrule
\end{tabular}
\caption{Planning success rate (\%) on the four LeWM benchmark environments. SD-JEPA at $k_\text{prog}{=}2$ is our minimal triplet-only rung (subspace split $+$ cosine triplet on $\zp$, no other auxiliary loss terms); the ``best $k$'' row reports the highest 3-seed mean across the per-environment $k_\text{prog}$ sweep (Tab.~\ref{tab:cross-env-kprog}). $^\ddagger$ Single seed (3072), multi-seed in flight. A2\_full (split removed, falsifier) returns to the LeWM baseline at $96$ on Push-T (Tab.~\ref{tab:subspace-ablation}).}
\label{tab:comparison}
\end{table}

\subsection{The subspace split is load-bearing}
\label{sec:subspace-falsifier}

To isolate \emph{which} component of SD-JEPA drives the gain, two falsifiers on Push-T (Tab.~\ref{tab:subspace-ablation}, single seed). A2\_full disables the split: $96 = $ A0. A2\_split\_full keeps the split but mis-targets the triplet on the full latent: $92$, four points \emph{below} A0. The split itself, not the triplet in isolation, is the load-bearing ingredient: the empirical analogue of Prop.~\ref{prop:noduplication} on the actual training dynamics.

\begin{table}[h]
\centering
\small
\begin{tabular}{lcccc}
\toprule
Variant         & $k_\text{prog}$ & Triplet target & SIGReg domain & Push-T (\%) \\
\midrule
A0 (baseline)   & 0   &,       & full $\z$ & 96 \\
A2\_full        & 0   & full $\z$ & full $\z$ & 96 \\
A2 (canonical)  & 2   & $\zp$     & $\zc$     & \textbf{98} \\
A2\_split\_full & 2   & full $\z$ & $\zc$     & \textbf{92} \\
\bottomrule
\end{tabular}
\caption{Subspace-ablation falsifier on Push-T (single seed 3072). A2\_full ${=}$ A0 isolates the split as the source of the gain; A2\_split\_full ${<}$ A0 shows that even with the split, mis-targeting the triplet onto the full latent actively damages $\zc$. Both findings empirically validate the disjoint-gradient-supports framing (Prop.~\ref{prop:disjoint}, \ref{prop:noduplication}).}
\label{tab:subspace-ablation}
\end{table}

\subsection{$k_{\text{prog}}$ scales with task complexity}
\label{sec:kprog-sweep}

The optimal $k_\text{prog}$ varies per env (Tab.~\ref{tab:cross-env-kprog}). On Push-T, multi-seed evaluation gives a clean monotone $94 \to 96 \to 97.3$ at $k \in \{2, 4, 8\}$, with the $k{=}8$ checkpoint exhibiting a near-perfect $S^1$ progression manifold ($\sigma_r \in [0.04, 0.07]$, angular span $6.1$ rad on seven held-out episodes; App.~\ref{app:latent-geometry}). Two-Room is non-monotone ($k{=}2$ and $k{=}8$ both $90$, $k{=}4$ at $88$), consistent with the 1-D navigation topology saturating at two progression dimensions; Reacher prefers a moderate $k{=}4$ ($92$ vs.\ $84$/$82$). The cross-env pattern is not ``higher $k$ always wins'' but ``the right $k$ matches the task's intrinsic progression dimensionality.'' A negative control with $\zp$-only MSE planning collapses to $28\%$ on Push-T: the progression subspace is too low-dimensional to carry goal information alone; the gain at higher $k$ comes from the encoder devoting more budget to a coherent progression manifold while the planning cost still operates primarily on $\zc$.

\begin{table}[h]
\centering
\small
\begin{tabular}{lccc|c}
\toprule
Env             & $k_\text{prog}{=}2$ & $k_\text{prog}{=}4$ & $k_\text{prog}{=}8$ & Best $k$ \\
\midrule
Push-T          & 94.0                & 96.0                & \textbf{97.3}       & $k{=}8$ \\
Two-Room        & \textbf{90.0}       & 88.0                & \textbf{90.0}       & $k{=}\{2, 8\}$ \\
Reacher         & 84.0                & \textbf{88.0}       & 83.3                & $k{=}4$ \\
OGB-Cube        & \textbf{72.0}       & 69.3                & 69.3                & $k{=}2$ \\
\bottomrule
\end{tabular}
\caption{Cross-environment $k_\text{prog}$ sweep, $3$-seed mean per cell (seeds $\{0, 42, 3072\}$) at the $\zc$-MSE planning cost. The optimal $k$ varies per env in $[2, 8]$. Full-z planning shifts means by at most $\pm 4$ pp, the largest effect being $+3.3$ pp on Reacher at $k{=}8$ (App.~\ref{app:ablations}).}
\label{tab:cross-env-kprog}
\end{table}

\subsection{The latent acts as a scene-aware compass}
\label{sec:latent-diagnostics}

The canonical SD-JEPA architecture exposes a 1-D angular readout $\theta_t = \mathrm{atan2}(\zp_{t,2}, \zp_{t,1})$ on the trained progression subspace. We probe what this coordinate captures along three axes: the cross-episode geometry it produces in $\zp$ versus $\zc$, the physical task quantities it correlates with across environments, and how it behaves under controlled observation perturbations. The picture is environment-dependent and partly clock-like, partly task-coupled; the full per-environment breakdown is in App.~\ref{app:latent-geometry}.

\paragraph{Cross-episode geometry matches the disjoint-supports prediction.}
Prop.~\ref{prop:disjoint} predicts that $\zp$ should carry shared cross-episode progression structure while $\zc$ should be episode-specific. The Push-T A2 t-SNE (Fig.~\ref{fig:latent-geometry-body}, App.~\ref{app:latent-geometry}) makes the asymmetry visible: $\zc$ separates into per-episode clusters while $\zp$'s arcs partially mix across episodes. The second moments confirm it: $\overline{\cos}(\zp[t]) \approx 0.5$ vs.\ $\overline{\cos}(\zc[t]) \approx 0$, with the full $\z$ inheriting the $\zc$-side near-orthogonality.

\paragraph{$\theta_t$ is partly a clock and partly a task-phase coordinate, with the balance varying by env and $k_{\mathrm{prog}}$.}
We score the per-episode Spearman $\rho$ between $\theta_t$ and candidate proxies: the elapsed-time clock (\texttt{step\_idx}) plus env-specific physical quantities (\texttt{block\_target\_dist} on Cube, \texttt{ee\_target\_dist} on Reacher, \texttt{block\_angle\_err} on Push-T). Mean $|\rho|$ tables, heatmaps, and per-episode signed values are in App.~\ref{app:latent-geometry}. Cube is the only env where a task-physical signal beats the clock (\texttt{block\_target\_dist}, $|\rho|{=}0.59$ vs $0.47$); elsewhere the two are within $0.1$ or the clock leads. On Push-T, $|\rho|$ with the clock falls from $0.91$ at $k{=}2$ to $0.56$ at $k{=}8$ as the physical signals catch up, consistent with $\theta$ becoming more cyclic-phase-like at higher $k$. The cube state-traj overlay (Fig.~\ref{fig:latent-geometry-body}, App.~\ref{app:latent-geometry}) makes one episode concrete: $\theta$ traces a smooth phase gradient along the planar trajectory.

\paragraph{$\theta_t$ as a richer surprise signal: separating the \emph{moment} of surprise from its \emph{meaning}.}
Standard latent VoE, $\|\hat\z_{t+1}-\z_{t+1}\|$, says \emph{when} the model was wrong but not \emph{about what}; identical scalar magnitudes may correspond to different counterfactual interpretations. We test the angular signal on Push-T with a teleport-and-continue perturbation: at step $50$ of held-out episode $0$, the next observation is replaced by a frame from episode $1000$ (whose T-block sits on the opposite side of the goal) and the planner resumes. Two distinct signals appear in $\theta_t$ (Fig.~\ref{fig:teleport-continue}). (i) A sharp single-frame spike of ${\sim}1.5$ rad in $|\Delta\theta_t|$ ($\approx 75\times$ baseline drift) marks the \emph{moment}, like a scalar VoE spike. (ii) After the spike, $\theta_t$ does not return: it \emph{relocalises} from sector $\theta\!\approx\!-2.5$ to sector $\theta\!\approx\!-4.0$, corresponding to the new solving mode (T-block on the other side), marking the \emph{meaning}, the model's updated belief about which task phase is active. Standard MSE-VoE reports only the spike, not the re-localisation. Three further perturbation modes (splice, 1-frame teleport, reverse) reproduce this pattern (App.~\ref{app:perturb}).

\paragraph{$|\Delta\theta_t|$ outperforms scalar prediction-error at localising semantic phase events.} The qualitative argument above motivates a quantitative test: does $|\Delta\theta_t|$ actually pinpoint task-meaningful events more reliably than the latent prediction-error magnitude $\|\hat\z_{t+1} - \z_{t+1}\|$ (z-MSE), the standard surprise metric in latent world models? On OGBench-Cube the gripper-contact transitions provide an objective ground-truth event signal: each pick-and-place episode has roughly four such transitions (close, open, close, open) marking semantically meaningful phase boundaries. We compute the per-step AUROC of each surprise metric against labels marking ``step is within $\pm \mathrm{tol}$ of any contact transition'' on $40$ held-out episodes ($160$ ground-truth events, $1480$ steps; SD-JEPA $k_{\mathrm{prog}}{=}8$ checkpoint). Pooled AUROC and the per-episode head-to-head appear in Tab.~\ref{tab:phase-auroc}; the qualitative phase overlay on episode $500$ is shown in Fig.~\ref{fig:phase-overlay-cube} and the four-panel summary across all episodes in Fig.~\ref{fig:phase-align-summary} (App.~\ref{app:phase-align}).

\begin{table}[h]
\centering
\small
\begin{tabular}{lcccc}
\toprule
tolerance       & z-MSE     & $|\Delta\theta|$    & margin ($|\Delta\theta|{-}$z-MSE) & $|\Delta\theta|$ wins \\
\midrule
$\pm 1$ step    & $0.238$   & $\mathbf{0.414}$    & $\mathbf{+0.176}$                  & $\mathbf{39 / 40}$ ($97.5\%$)  \\
$\pm 2$ steps   & $0.360$   & $\mathbf{0.473}$    & $+0.113$                          & $34 / 40$ ($85\%$)             \\
$\pm 3$ steps   & $0.513$   & $\mathbf{0.565}$    & $+0.052$                          & $29 / 40$ ($72.5\%$)           \\
\bottomrule
\end{tabular}
\caption{Phase-event-alignment AUROC on OGBench-Cube. Each step in each held-out episode is labelled positive iff it falls within $\pm$tolerance of a ground-truth gripper-contact transition; we report pooled AUROC of each surprise metric against these labels (40 held-out episodes, 160 events, 1480 steps). $|\Delta\theta|$ outperforms z-MSE at every tolerance, with a $97.5\%$ per-episode win rate at the tightest tolerance. The two metrics measure different things: as a complementary control, an action-corruption test on the same checkpoint shows z-MSE is the better tool when the question is ``where is the magnitude anomaly'' (App.~\ref{app:phase-align}). Both observations are consistent with the design intent that $\theta_t$ exposes structural / phase information not captured by the scalar prediction error.}
\label{tab:phase-auroc}
\end{table}

\begin{figure}[h]
\centering
\includegraphics[width=0.6\linewidth]{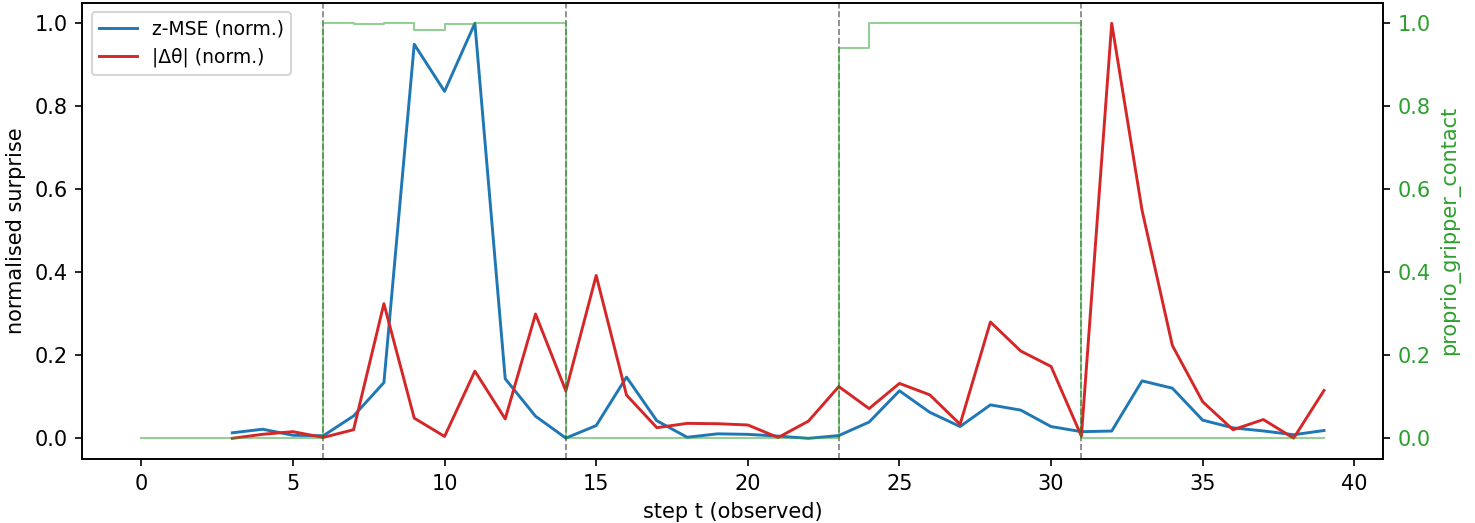}
\caption{Cube episode $500$ head-to-head: per-step normalised z-MSE (blue) and $|\Delta\theta_t|$ (red) vs the binary gripper-contact signal (green; dashes mark transitions). z-MSE peaks at $t{\approx}10$, mid-manipulation; $|\Delta\theta_t|$ peaks at the contact transition $t{\approx}32$, with secondary peaks aligned to the other transitions. The per-episode AUROC distribution across all $40$ held-out episodes is in App.~\ref{app:phase-align}, Fig.~\ref{fig:phase-align-summary}.}
\label{fig:phase-overlay-cube}
\end{figure}

\paragraph{$\zp$ packs within-episode progress information densely.} A complementary test asks how much task-progress information is packed into the progression subspace per dimension. We fit linear-regression probes from various latent features to a per-episode-normalised target-distance signal, in two regimes: a \emph{per-episode} probe (within each held-out episode, leave-one-step-out CV; mean R$^2$ over $40$ episodes) and a \emph{pooled} probe (leave $25\%$ of episodes out, $30$ bootstraps). On cube, the per-episode probe gives $\zp$ ($8$ dims, $4.2\%$ of the latent) mean R$^2 = \mathbf{0.905}$ ($\mathbf{100\%}$ positive over $40$ episodes); $(\sin\theta, \cos\theta)$ packs $\mathbf{55.5\%}$ of the variance into just $2$ dimensions, $2\times$ a random-$2$-d projection of $\z$ ($0.263$) and the elapsed-time clock ($0.291$). \textbf{Cross-env replication on the same $40$-episode-per-env protocol confirms the claim is not cube-specific} (Tab.~\ref{tab:probe-crossenv}): $\zp$ R$^2 \in \{0.91, 0.91, 0.95, 0.72\}$ on cube/Push-T/Reacher/Two-Room, $100\%$ positive on the first three and $95\%$ on Two-Room, never below $0.72$ across any env using only $4.2\%$ of the latent. The largest probe-vs-clock gap, $+0.66$ on Reacher, lines up with Reacher being the only env with a robust $\zp$-aware planning lift (\S\ref{sec:cross-env-summary}, App.~\ref{app:ablations}). The pooled probe is the honest caveat: across episodes with different absolute initial distances, only \texttt{step\_idx} stays reliably positive (App.~\ref{app:phase-align}, Tab.~\ref{tab:probe}, Fig.~\ref{fig:probe}, \ref{fig:probe-crossenv}); the compass is a per-trajectory phase coordinate, not a globally calibrated cross-episode distance estimate.

\begin{table}[h]
\centering
\small
\begin{tabular}{lcccc}
\toprule
feature (mean R$^2$ over $40$ eps)            & Cube             & Push-T           & Reacher          & Two-Room          \\
\midrule
\texttt{step\_idx} (clock)                    & $0.291$          & $0.617$          & $0.286$          & $0.690$           \\
$(\sin\theta, \cos\theta)$ ($2$-d)            & $0.555$          & $0.422$          & $0.335$          & $0.040$           \\
random-$2$-d projection of $\z$ (control)     & $0.263$          & $0.295$          & $0.236$          & $-0.271$          \\
\textbf{$\zp$ ($8$-d, $4.2\%$ of latent)}     & $\mathbf{0.905}$ & $\mathbf{0.908}$ & $\mathbf{0.948}$ & $\mathbf{0.717}$  \\
\bottomrule
\end{tabular}
\caption{Cross-environment per-episode linear probe ($40$ held-out episodes per env, LOO-CV) against an env-specific target-distance signal. $\zp$ wins everywhere using only $4.2\%$ of the latent and is positive on $\geq 95\%$ of episodes in every env ($100\%$ on Cube/Push-T/Reacher). The probe-vs-clock gap is largest on Reacher ($+0.66$ R$^2$), the only env with a robust $\zp$-aware planning lift. Per-feature dimensions, full distribution, and the pooled-probe negative result are in App.~\ref{app:phase-align}.}
\label{tab:probe-crossenv}
\end{table}

\paragraph{Three operationalisations of the same intuition.} The phase-event AUROC (Tab.~\ref{tab:phase-auroc}, $|\Delta\theta_t|$ wins on $39/40$ cube episodes) asks where the semantic events are; the regime-change CPD (App.~\ref{app:phase-align}) asks where the regime boundaries are; the linear probe (Tab.~\ref{tab:probe}, App.~\ref{app:phase-align}) asks how much progress information is packed per dimension. All three confirm that the trained $\zp$ subspace, and $\theta$ in particular, carries task-phase structure that the standard scalar prediction-error metric does not surface.

\begin{figure}[h]
\centering
\begin{subfigure}[b]{0.495\linewidth}
\centering
\includegraphics[width=\linewidth]{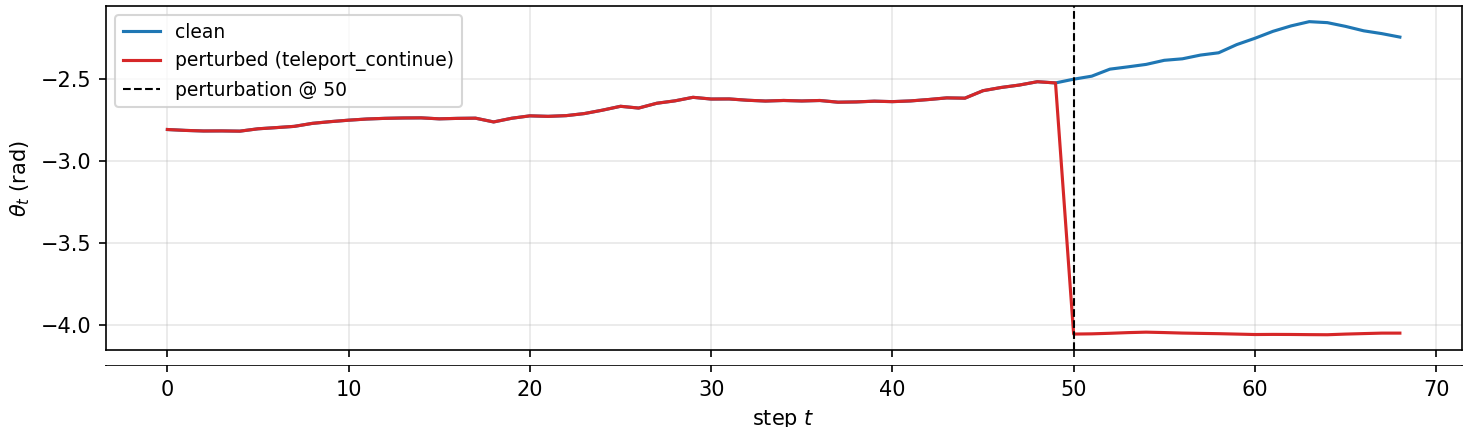}
\subcaption{Progression angle $\theta_t$: re-localisation (the \emph{meaning}).}
\label{fig:teleport-continue-prog}
\end{subfigure}\hfill
\begin{subfigure}[b]{0.495\linewidth}
\centering
\includegraphics[width=\linewidth]{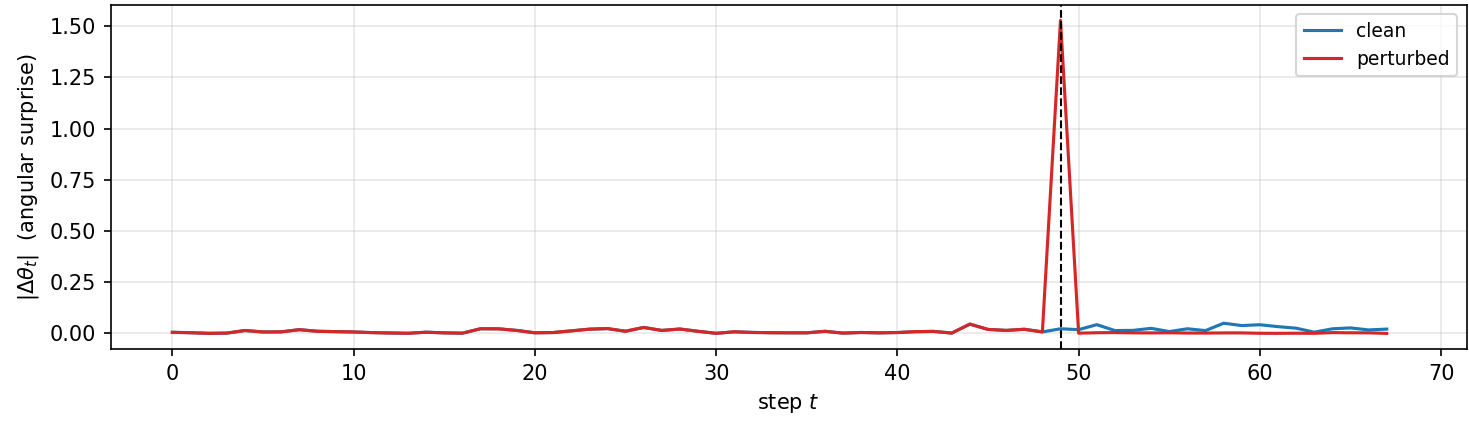}
\subcaption{Angular surprise $|\Delta\theta_t|$: 1-frame spike (\emph{moment}).}
\label{fig:teleport-continue-surprise}
\end{subfigure}
\caption{Push-T A2 under the teleport-and-continue perturbation. \textbf{(a)} Unwrapped $\theta_t$ before vs.\ after the perturbation: $\theta_t$ does not return to the pre-perturbation trace but \emph{relocalises} from one sector of the progression manifold to a semantically appropriate new one. \textbf{(b)} $|\Delta\theta_t|$ shows a sharp single-frame spike at step $50$, $\sim\!75\times$ the baseline drift. MSE-surprise reports only the spike, not the re-localisation.}
\label{fig:teleport-continue}
\end{figure}
\subsection{Cross-environment summary and limitations}
\label{sec:cross-env-summary}

SD-JEPA improves on three of four \textsc{LeWM} envs (Tab.~\ref{tab:comparison}, Tab.~\ref{tab:cross-env-kprog}, $3$-seed multi-seed); the framework is robust across $k_\text{prog} \in \{2, 4, 8\}$ but the per-env optimum varies. Whether full-z planning helps is also task-dependent (Reacher $+3.3$ pp at $k{=}8$, other envs within $\pm 1$ pp; App.~\ref{app:ablations}). The latent-geometry diagnostics establish that the architecture produces the predicted progression-content geometry \emph{independently} of any planning-success delta, providing a second axis of evidence on top of the empirical one.

\paragraph{Known limitations.} (i) Per-environment $k_\text{prog}$ tuning means we report best-$k$ values rather than a single universal $k$. (ii) The triplet and angular readouts both use cosine geometry; other distance metrics or non-angular readouts (e.g.\ a learned scalar progression head) may surface complementary structure. (iii) The fixed orthogonal $[P, Q]$ is a modeling choice; a learned orthogonal decomposition is the natural extension. Extended ablations on auxiliary terms (straightening, polar conditioning, angular planning cost) are in App.~\ref{app:ablations}.

\section{Related work}
\label{sec:related}

\paragraph{Latent world models.} We complete the related work mentioned in the introduction with pixel-space architectures such as DreamerV3 \cite{Hafner_Pasukonis_Ba_Lillicrap_2024} and feature-space JEPAs include I-JEPA, V-JEPA, PLDM, DINO-WM, and most directly LeWM \citep{lecun2022path,assran2023ijepa,bardes2024vjepa,sobal2022slow,maes2026lewm}. Latent-action world models such as LAWM, Genie, Gaia-2, Cosmos and explore alternative conditioning regimes \citep{garrido2026lawm, bruce2024genie, ye2025lapa, team2025hunyuanworld, russell2025gaia, agarwal2025cosmos}. SD-JEPA inherits LeWM's encoder-predictor stack and the SIGReg anti-collapse regularizer, adding only a fixed orthogonal subspace split and a triplet term on the progression side; the parameter count is identical to LeWM ($18.04$M, no new learnable weights).

\paragraph{Progression geometry and temporal straightening.} The cosine-margin triplet on $\zp$ is a temporal-ordering loss in the family of \citet{dwibedi2019tcc, schneider2023cebra}. Our straightening regulariser $\mathcal{L}_{\mathrm{straight}} = -\E[\cos(\vv_t,\vv_{t+1})]$ is identical to the curvature loss of \citet{wang2026straightening} (stop-gradient anti-collapse, gradient-descent planner). On Push-T at the matched protocol, their best published configuration reaches $91.3 \pm 2.5$; SD-JEPA at $k_\text{prog}{=}8$ reaches $97.3 \pm 1.2$, $6$ points above. Adding the curvature regulariser on top of SIGReg in our setup (rung A4, App.~\ref{app:ablations}) costs $14$ points on Push-T, consistent with the principle that anti-collapse forces should not compete on the same coordinates. Latent straightening has applications in OOD detection \cite{interno2025ai}.  The SSL duality framework of \citet{garrido2023duality} grounds our no-double-counting result.

\section{Discussion}
\label{sec:discussion}

\paragraph{The latent as a navigable map.} The diagnostics in \S\ref{sec:latent-diagnostics} support reading $\theta_t$ as less of a chronological counter and more of a state-based navigation signal: it advances with task progress, regresses in lockstep when the agent backtracks, and under perturbation jumps to a semantically appropriate new sector. The qualitative parallel with mammalian head-direction cells \citep{taube1990hd, knierim2020hippocampus} is suggestive; we make no mechanistic claim. App.~\ref{app:perturb} reproduces the moment-and-meaning decoupling under three further perturbation modes.

\paragraph{When does $\theta$-coupling translate into a planning lift?}
The full-z planning lift is non-uniform: Reacher gains $+3.3$ pp at $k_{\mathrm{prog}}{=}8$, Cube $+2.0$, Two-Room and Push-T nothing. Cross-referencing per-environment $\theta$-task coupling (App.~\ref{app:latent-geometry}), $\theta$-coupling is \emph{necessary but not sufficient}: Reacher (coupling $|\rho|=0.49$) sees the cleanest lift; Two-Room ($0.34$) sees none. The two anomalies have separate bottlenecks: Cube has the strongest coupling ($0.59$) but the predictor's contact-rich rollout error caps the cost-function gain, while Push-T at $k_{\mathrm{prog}}{=}8$ is already near the ceiling ($97.3\%$). Reacher sits in the sweet spot: phase-coupled enough for the angular cost to add information, simple-enough dynamics that the predictor is not the bottleneck. The cross-environment probe (Tab.~\ref{tab:probe-crossenv}) gives a complementary mechanistic story for the same pattern: Reacher has the largest within-episode probe-vs-clock gap ($+0.66$ R$^2$, the largest in our suite), so the encoder concentrates the most progress information into $\zp$ exactly where the planning cost can exploit it.

\paragraph{What $\theta_t$ adds beyond scalar VoE.} The angular readout differs from $\|\hat\z_{t+1}-\z_{t+1}\|$ in three ways: it \emph{reports a phase sector}, it is \emph{signed}, and it is \emph{cross-environment comparable} on a unit circle. The three tests of \S\ref{sec:latent-diagnostics} confirm complementarity rather than competition with VoE; the broader design space of geometric readouts of structured-latent world models is an open agenda.

\section{Conclusion}
\label{sec:conclusion}

SD-JEPA splits the JEPA latent into a low-dimensional progression subspace shaped by a cosine-margin triplet loss and a high-dimensional content subspace regularised by SIGReg. We prove the two anti-collapse forces act on disjoint coordinates and validate this empirically: removing the split collapses the architectural gain back to the \textsc{LeWM} baseline. At matched 10-epoch compute (3-seed multi-seed) SD-JEPA improves on three of four \textsc{LeWM} control benchmarks (Two-Room $+3$, Reacher $+2$, Push-T $+1.3$; cube $-2$). Beyond planning, the 1-D angular coordinate $\theta_t$ behaves as a scene-aware compass that tracks progress, regresses on backtrack, and under perturbations separates the \emph{moment} of surprise from its \emph{meaning}. Three independent quantitative tests back this up: $|\Delta\theta_t|$ beats z-MSE at semantic-event localisation on $40$ cube episodes ($+0.18$ pooled AUROC at $\pm 1$-step tolerance, $39/40$ per-episode wins), change-point detection on the angular-velocity trace beats prediction-error metrics at loose tolerance, and a within-episode linear probe replicates across all four envs ($40$ episodes per env) with $\zp$ R$^2 \in [0.72, 0.95]$ using only $4.2\%$ of the latent. All three back the same conclusion: the angular readout extends the standard interpretability toolkit with information the scalar prediction-error metric does not supply.

{\small
\bibliographystyle{plainnat}
\bibliography{refs}
}

\appendix

\section{Extended theoretical analysis}
\label{app:theory}

This appendix reproduces the full theoretical content compressed into \S\ref{sec:theory}, including the corollary on the marginal product structure, the Garrido-style duality discussion, and three further results (consistency with emergent straightening, identifiability of the progression axis, and conditions under which the split fails).

\subsection{Non-conflict under the subspace split}

\begin{proposition}[Disjoint gradient supports]

Let $\z \in \R^D$ be written as $\z = P\zp + Q\zc$ with $P, Q$ as in \S\ref{sec:method}, and let $\mathrm{SIGReg}$ be applied on $\zc$ and $\mathcal{L}$ on $\zp$. Then for every sample,
\begin{equation}
\mathrm{span}\!\left(\nabla_{\z}\,\mathrm{SIGReg}(Q^\top \z)\right) \;\subseteq\; \mathrm{col}(Q),
\qquad
\mathrm{span}\!\left(\nabla_{\z}\,\mathcal{L}(P^\top \z)\right) \;\subseteq\; \mathrm{col}(P),
\end{equation}
and these two subspaces are orthogonal.
\end{proposition}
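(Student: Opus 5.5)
The argument is a chain-rule computation followed by one orthogonality check. The only subtlety is that SIGReg is a batch-level functional: it depends on the whole matrix $Z$, not on a single latent.

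\textbf{Step 1: batch setting.} I will work with a batch $\z_1,\dots,\z_N$ stacked as rows of $Z\in\R^{N\times D}$. The content block is then $Z_c = ZQ$, so each row is $Q^\top\z_i$, and similarly $Z_p = ZP$. "For every sample" will be read as: for every $i$, the partial gradient $\partial/\partial \z_i$ of the batch objective lies in the stated subspace.

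\textbf{Step 2: SIGReg.} Write $\mathrm{SIGReg}(Z_c) = \frac1M\sum_m T(Z_c u^{(m)})$ with $u^{(m)}\in\mathbb S^{D-k-1}$. This is a composition $Z\mapsto ZQ\mapsto g(ZQ)$, where $g$ is differentiable because the Epps--Pulley statistic is a smooth function of its samples. The chain rule for the linear map $Z\mapsto ZQ$ gives
\[
\nabla_Z\, g(ZQ) = \nabla g(ZQ)\, Q^\top .
\]
Its $i$-th row, viewed as a column vector, is $Q\,\big(\nabla g(ZQ)\big)_{i,\cdot}^{\top}$. This vector lies in $\mathrm{col}(Q)$.

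\textbf{Step 3: the progression loss.} The same argument applies to any differentiable $\mathcal L$ of $ZP$; the triplet loss is treated through its batch form. For each $i$,
\[
\nabla_{\z_i}\mathcal L = P\,\big(\nabla \mathcal L(ZP)\big)_{i,\cdot}^{\top} \in \mathrm{col}(P).
\]
If $\mathcal L$ is only piecewise differentiable, which happens at the hinge of the triplet margin and at $\zp=0$ for the cosine, I will apply the same identity to each element of the Clarke subdifferential. That keeps the inclusion intact.

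\textbf{Step 4: orthogonality.} Take $a = Px\in\mathrm{col}(P)$ and $b = Qy\in\mathrm{col}(Q)$. Then
\[
a^\top b = x^\top P^\top Q\, y = 0,
\]
because $P^\top Q = 0$. So the two column spaces are orthogonal, and so are the spans of the gradients contained in them. In the canonical split these are simply the first $k$ and the last $D-k$ coordinate axes.

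\textbf{Main obstacle.} The main obstacle is making the batch dependence precise. SIGReg couples all samples, and the triplet couples anchor, positive and negative. It is therefore tempting, but wrong, to treat either loss as a function of a single $\z$. The fix is to do the chain rule at the matrix level, as in Steps 2 and 3. The row-wise statement then follows because right-multiplication by $Q^\top$ or $P^\top$ acts on each row independently. Differentiability of $T$ I will simply assume from its closed form: an integral of squared characteristic-function differences under a Gaussian weight.
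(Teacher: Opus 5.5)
Your proof is correct and follows the paper's argument. Both rest on the chain rule through the linear projection, which gives $Q\,\nabla_{\zc}\mathrm{SIGReg}$ and $P\,\nabla_{\zp}\mathcal{L}$, and both then use $P^\top Q = 0$ to get orthogonality. Your matrix-level (batch) chain rule and the subgradient remark make rigorous what the paper's per-sample sketch leaves implicit, since SIGReg is a batch statistic.
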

\begin{proof}[Proof sketch]
$\mathrm{SIGReg}(Q^\top\z)$ depends on $\z$ only through $Q^\top\z$, so its gradient with respect to $\z$ factors through $Q$: $\nabla_{\z}\, \mathrm{SIGReg}(Q^\top\z) = Q \,\nabla_{\zc}\mathrm{SIGReg}(\zc)$. Identically for the second loss $\mathcal{L}$ on $\zp$ and $P$. Orthogonality $P^\top Q = 0$ gives the conclusion.
\end{proof}
Proposition~\ref{prop:disjoint} formalises the informal claim that an arbitrary loss on $\zp$ alone does not interfere with SIGReg on $\zc$. The triplet loss on $\zp$ and the prediction loss on the full $\z$ do have components in $\mathrm{col}(Q)$ (through the content subspace of the predictor output), so the non-conflict is specifically between the marginal-distributional term on $\zc$ and any progression-geometric term on $\zp$.

\begin{corollary}
Under the subspace split, the isotropic Gaussian constraint imposed by $\mathrm{SIGReg}$ holds on the marginal $p(\zc)$ regardless of the geometry imposed on $p(\zp)$ by any loss acting on $\zp$ alone. 
\end{corollary}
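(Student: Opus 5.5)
The plan is to read the corollary as a statement about the optimisation problem that $\mathrm{SIGReg}$ imposes. The claim is then that the set of minimisers of the $\mathrm{SIGReg}$ term, and the gradient field it generates, are unaffected by whatever distribution is imposed on $\zp$. I would make this precise in three steps.

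\textbf{Step 1: $\mathrm{SIGReg}$ depends only on the marginal of $\zc$.} By \eqref{eq:sigreg}, $\mathrm{SIGReg}(Q^\top Z)$ is an average of Epps--Pulley statistics $T(Zu^{(m)})$. Each direction $u^{(m)}$ is drawn in $\R^{D-k}$ and embedded as $Qu^{(m)}$. Each statistic is a functional of the empirical (or population) law of the projections $\langle \zc, u^{(m)}\rangle$ only. So the regulariser is a functional of $p(\zc)$ alone. By Cramér--Wold, its population minimum, taken over all directions, is attained exactly when $p(\zc)=\mathcal{N}(0,I_{D-k})$. None of this refers to $\zp$.

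\textbf{Step 2: gradients.} This step is where Prop.~\ref{prop:disjoint} enters. Any loss $\mathcal{L}(P^\top\z)$ has per-sample gradient in $\mathrm{col}(P)$. Its flow therefore changes only the coordinates $P^\top\z=\zp$ and leaves $Q^\top\z=\zc$, and hence $p(\zc)$, unchanged at the latent level. Conversely, the $\mathrm{SIGReg}$ gradient lives in $\mathrm{col}(Q)$. So the joint latent-level objective $\lambda_S\mathrm{SIGReg}(\zc)+\lambda_T\mathcal{L}(\zp)$ is a separable sum. Its stationary points are products of stationary points of each term.

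\textbf{Step 3: existence of a compatible joint law.} For any target law $\mu$ on $\R^k$ produced by the $\zp$ loss, the coupling $\mu\otimes\mathcal{N}(0,I_{D-k})$ has $\zc$-marginal Gaussian. It therefore achieves the $\mathrm{SIGReg}$ minimum simultaneously with any minimum of $\mathcal{L}$. So no geometry on $p(\zp)$ obstructs the isotropic constraint on $p(\zc)$.

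The main obstacle is scoping the claim honestly. Joint latents are produced by a shared encoder $f_\theta$, and $\mathcal{L}_{\text{pred}}$ acts on the full $\z$. So at the \emph{parameter} level the two terms can still conflict: an encoder may be unable to realise the product coupling. The statement must therefore be proved at the level of the latent variables, or distributions over $\R^D$, and not of $\theta$. I would state the corollary in that sense and flag, as in the remark after Prop.~\ref{prop:noduplication}, that encoder-level compatibility is not covered.
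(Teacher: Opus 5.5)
Your proposal is correct and follows the paper's route. The paper gives no separate proof; it treats the corollary as an immediate consequence of Proposition~\ref{prop:disjoint}: $\mathrm{SIGReg}(Q^\top\z)$ sees $\z$ only through $\zc$, while any $\zp$-loss has gradient confined to $\mathrm{col}(P)\perp\mathrm{col}(Q)$. That is exactly your Steps 1--2.

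Your Step 3 product coupling and your restriction to the latent rather than the parameter level make explicit what the paper leaves implicit, consistent with its own remark that encoder-level conflict is not precluded. One small correction: in Step 3, $\mu$ should be a law of $\zp$ jointly with its time and trajectory labels, since the triplet loss depends on those.
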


\subsection{Relation to the sample/dimension-contrastive duality}

The cosine-margin triplet loss on $\zp$ is a sample-contrastive criterion in the sense of \citet{garrido2023duality}: it repels $\zp_a$ from $\zp_n$ while attracting it to $\zp_p$, in a space of $L_2$-normalized embeddings (cosine similarity is the inner product of unit vectors). SIGReg on $\zc$ is not exactly a dimension-contrastive criterion, but it belongs to the same \emph{variance-repulsion family}: it penalizes any departure of the marginal from isotropic Gaussian, and in particular penalizes any direction along which the variance is abnormally low (collapse) or the covariance is nonzero (feature correlation). Under the double-$L_2$-normalization assumption of \citet{garrido2023duality}, the anti-collapse force of the triplet loss on $\zp$ equals (up to additive constants) that of a VICReg-style dimension-contrastive criterion. SIGReg subsumes that variance/covariance criterion as a special case (matching the second moments of an isotropic Gaussian).

\begin{proposition}[No double-counting on orthogonal subspaces]

Let $\mathcal{L}_{\text{trip}}$ act on $\zp$ and $\mathrm{SIGReg}$ act on $\zc$. Under the subspace split, their anti-collapse gradients act on orthogonal coordinates and therefore do not duplicate: removing either term cannot be compensated by increasing the other, since the other has no gradient in the affected coordinate directions.
\end{proposition}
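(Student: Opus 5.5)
The plan is to reduce the statement to Proposition~\ref{prop:disjoint} and then make the informal phrase ``cannot be compensated'' precise. That phrase will be read as a statement about gradients with respect to the latent $\z$: the gradient field of $\lambda_T\mathcal{L}_{\text{trip}}(P^\top\z) + \lambda_S\,\mathrm{SIGReg}(Q^\top\z)$, projected onto a given subspace, does not depend on the weight of the term supported on the complementary subspace.

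The first step applies Proposition~\ref{prop:disjoint} twice. Once with $\mathcal{L}=\mathcal{L}_{\text{trip}}$, which gives
\[
\nabla_{\z}\mathcal{L}_{\text{trip}}(P^\top\z)=P\,\nabla_{\zp}\mathcal{L}_{\text{trip}}\in\mathrm{col}(P).
\]
Once with SIGReg, which gives
\[
\nabla_{\z}\mathrm{SIGReg}(Q^\top\z)=Q\,\nabla_{\zc}\mathrm{SIGReg}\in\mathrm{col}(Q).
\]
Since $P^\top Q=0$, left-multiplying by $P^\top$ and by $Q^\top$ gives
\[
P^\top\nabla_{\z}\bigl(\lambda_T\mathcal{L}_{\text{trip}}+\lambda_S\mathrm{SIGReg}\bigr)=\lambda_T\nabla_{\zp}\mathcal{L}_{\text{trip}},
\qquad
Q^\top\nabla_{\z}\bigl(\lambda_T\mathcal{L}_{\text{trip}}+\lambda_S\mathrm{SIGReg}\bigr)=\lambda_S\nabla_{\zc}\mathrm{SIGReg}.
\]
Each projected component therefore depends on exactly one of $(\lambda_T,\lambda_S)$.

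The second step is the ``no compensation'' claim. Setting $\lambda_T=0$ makes the $\mathrm{col}(P)$ component of the combined anti-collapse gradient vanish identically, for every value of $\lambda_S$. No rescaling of SIGReg can restore any repulsive force on $\zp$. The same holds with the roles of the two terms swapped. The argument works pointwise per sample and carries over to minibatch sums by linearity of the gradient.

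I will also record the loss-level counterpart, so the statement covers minimisers as well as gradients. The anti-collapse part of the objective is additively separable, $F(\zp,\zc)=\lambda_T\mathcal{L}_{\text{trip}}(\zp)+\lambda_S\mathrm{SIGReg}(\zc)$. Its infimum over $\zp$ is therefore independent of $\lambda_S$. In particular, a collapsed configuration on $\zp$ (for example a constant $\zp$) is not penalised by the SIGReg term at all.

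The main obstacle is scope rather than computation. The objective \eqref{eq:objective} also contains $\mathcal{L}_{\text{pred}}$ and $\mathcal{L}_{\text{straight}}$ on the full $\z$, and all terms share encoder parameters. I will state explicitly that the claim concerns only the two anti-collapse terms, at the level of latent coordinates. The first remark I will add is that $\mathcal{L}_{\text{pred}}$ is not itself an anti-collapse force: it is trivially minimised by collapse, so the fact that it couples the two subspaces does not reintroduce compensation. The second remark, consistent with the caveat after Proposition~\ref{prop:noduplication} in \S\ref{sec:theory}, is that at the parameter level the encoder gradients add through the Jacobian, so the non-duplication is a latent-space statement, not a guarantee on encoder training dynamics.
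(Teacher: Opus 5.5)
Your proposal is correct and follows the paper's route: the paper's proof consists only of invoking Proposition~\ref{prop:disjoint} with $\mathcal{L}=\mathcal{L}_{\text{trip}}$. Your weighted projections $P^\top\nabla_{\z}(\cdot)=\lambda_T\nabla_{\zp}\mathcal{L}_{\text{trip}}$ and $Q^\top\nabla_{\z}(\cdot)=\lambda_S\nabla_{\zc}\mathrm{SIGReg}$ make ``cannot be compensated'' explicit, and your scope caveats match the paper's own remark about encoder-level coupling. One small slip in your loss-level remark: the \emph{minimiser} of the separable $F$ over $\zp$ is independent of $\lambda_S$, but the infimum \emph{value} is not, since it carries the additive $\lambda_S\,\mathrm{SIGReg}(\zc)$ term.
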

\begin{proof}[Proof sketch]
Apply Proposition~\ref{prop:disjoint} with the triplet loss as the function $\mathcal{L}$ on $\zp$.
\end{proof}
Proposition~\ref{prop:noduplication} is the practical consequence of the duality: even though triplet and SIGReg are \emph{of the same family} (in the Garrido sense), on the subspace split they are \emph{complementary rather than redundant}. This gives a clean theoretical justification for using both terms simultaneously rather than treating one as subsuming the other.

\subsection{Consistency with emergent straightening}
\label{sec:straight-theory}

\textsc{LeWM} reports (Sec.~5 and Appendix~H of \citep{maes2026lewm}) that as training proceeds, the cosine similarity between consecutive latent velocity vectors $\vv_t = \z_{t+1} - \z_t$ increases emergently, without any explicit regularization. This perceptual straightening property, inspired by \citet{henaff2019straightening}, is an invariant of \emph{good} representations of natural dynamics: straight paths in latent space are linearly extrapolable, and so compound less error over multi-step rollouts.

Our straightening term $\mathcal{L}_{\text{straight}}(\z) = -\E[\cos(\vv_t, \vv_{t+1})]$ makes this trend explicit. It is:
\begin{itemize}[leftmargin=*]
\item \emph{scale-invariant} in $\z$: cosine depends only on directions of velocity, not magnitudes; this means $\mathcal{L}_{\text{straight}}$ does not compete with the variance-related component of $\mathrm{SIGReg}$;
\item \emph{subspace-consistent}: applying $\mathcal{L}_{\text{straight}}$ on the full $\z$ does not prefer one of $\zp,\zc$ over the other; in practice we may apply it only on $\zc$ if $\zp$ is already sufficiently constrained by the triplet ordering;
\item \emph{a soft regularizer}: it does not force $\cos=1$ (perfectly straight dynamics), only moves the population mean upward.
\end{itemize}
\begin{proposition}[Soft bias towards the emergent optimum]
Let $\mathcal{L}_{\mathrm{LeWM}}$ denote the \textsc{LeWM} base loss and assume (as empirically reported) that along its optimization path $\E[\cos(\vv_t, \vv_{t+1})]$ increases monotonically toward a limit $c^\star < 1$. Adding $\lambda_{\mathrm{str}}\,\mathcal{L}_{\mathrm{straight}}$ as a soft term (with $\lambda_{\mathrm{str}} > 0$) preserves the limiting direction of optimization: the gradient of $\mathcal{L}_{\mathrm{straight}}$ at the LeWM optimum is aligned with the direction of increasing straightness, hence the added term is non-antagonistic.
\end{proposition}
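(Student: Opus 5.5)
The plan is to reduce the claim to a first-order statement about descent directions at (or near) the \textsc{LeWM} optimum, rather than about global optimization dynamics. Write $S(\z) := \E_t[\cos(\vv_t,\vv_{t+1})]$, so that $\mathcal{L}_{\mathrm{straight}} = -S$. Let $\omega$ collect the encoder and predictor parameters, and let $\omega(\tau)$ be the \textsc{LeWM} optimization path with limit $\omega^\star$. The assumption says that $s(\tau) := S(\z_{\omega(\tau)})$ is nondecreasing and tends to $c^\star<1$. First I would make ``the direction of increasing straightness'' precise as the set of parameter directions $d$ with $\langle \nabla_\omega S, d\rangle \ge 0$. The statement then becomes: $-\nabla_\omega \mathcal{L}_{\mathrm{straight}} = \nabla_\omega S$ lies in this set, which holds trivially. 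The remaining content is that this direction is compatible with the path direction $\dot\omega(\tau) = -\nabla_\omega \mathcal{L}_{\mathrm{LeWM}}$. Monotonicity gives exactly this: $\dot s(\tau) = \langle \nabla_\omega S, \dot\omega\rangle \ge 0$, which is equivalent to $\langle \nabla_\omega S, -\nabla_\omega\mathcal{L}_{\mathrm{LeWM}}\rangle \ge 0$ along the path. So the two gradients make a non-obtuse angle.

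Second, I would convert this into the non-antagonism claim for the combined loss $\mathcal{L}_{\mathrm{LeWM}} + \lambda_{\mathrm{str}}\mathcal{L}_{\mathrm{straight}}$. Its negative gradient is $g + \lambda_{\mathrm{str}} h$, with $g = -\nabla\mathcal{L}_{\mathrm{LeWM}}$ and $h = \nabla S$. Using $\langle g,h\rangle\ge0$:
\begin{itemize}
\item $\langle g + \lambda_{\mathrm{str}} h, g\rangle = \|g\|^2 + \lambda_{\mathrm{str}}\langle h,g\rangle \ge \|g\|^2$, so the combined step still decreases $\mathcal{L}_{\mathrm{LeWM}}$ to first order.
\item $\langle g+\lambda_{\mathrm{str}} h, h\rangle \ge \lambda_{\mathrm{str}}\|h\|^2 \ge 0$, so straightness still increases.
\end{itemize}
This is the precise sense in which the limiting direction is ``preserved'': the combined flow stays in the cone where both the \textsc{LeWM} objective and $S$ improve. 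I would also record the scale-invariance of cosine, $S(\alpha\z)=S(\z)$. Differentiating in $\alpha$ gives $\langle \nabla_\z S, \z\rangle = 0$ (Euler), so $h$ has no radial component and cannot fight the variance part of SIGReg. This formalises the first bullet of \S\ref{sec:straight-theory}.

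The main obstacle is the limit point itself. At $\omega^\star$ we have $g=0$, so monotonicity along the path gives no information about $h$ there. Since $c^\star<1$, generically $h\neq0$, and the added term will then move the stationary point. I would handle this with a perturbation argument. If the Hessian $H$ of $\mathcal{L}_{\mathrm{LeWM}}$ at $\omega^\star$ is nonsingular, the implicit function theorem gives the new stationary point as $\omega^\star_\lambda = \omega^\star + \lambda_{\mathrm{str}} H^{-1}h + O(\lambda_{\mathrm{str}}^2)$. Positive definiteness of $H$ then gives $S(\omega^\star_\lambda) - c^\star = \lambda_{\mathrm{str}}\, h^\top H^{-1}h + O(\lambda_{\mathrm{str}}^2) > 0$, while $\mathcal{L}_{\mathrm{LeWM}}$ rises only by $O(\lambda_{\mathrm{str}}^2)$. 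So the optimum shifts toward more straightness at second-order cost. This is the ``soft bias'' of the title. It requires assuming a nondegenerate local minimum, which I would state explicitly as an extra hypothesis, since for overparameterised networks it is the step most likely to fail.
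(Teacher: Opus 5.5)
The paper gives no proof of this proposition. Its only justification is the remark built into the statement ($-\nabla\mathcal{L}_{\mathrm{straight}}=\nabla S$ points toward increasing straightness, which is true by definition), plus the admission that this is ``the weakest of our claims'' and rests on empirically reported emergence. Your first step recovers exactly that tautology, and everything after it is content the paper does not have. You are right that the monotonicity hypothesis says nothing at $\omega^\star$, because $g=0$ there. Your implicit-function argument is what gives ``soft bias'' a precise meaning: at a nondegenerate minimum, $\omega^\star_\lambda=\omega^\star+\lambda_{\mathrm{str}}H^{-1}h+O(\lambda_{\mathrm{str}}^2)$, straightness gains $\lambda_{\mathrm{str}}h^\top H^{-1}h$, and $\mathcal{L}_{\mathrm{LeWM}}$ rises only by $\tfrac12\lambda_{\mathrm{str}}^2h^\top H^{-1}h+O(\lambda_{\mathrm{str}}^3)$. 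Say explicitly, though, that this step never uses the monotone-increase assumption. It holds for any smooth auxiliary objective with nonzero gradient at a nondegenerate minimum, so the empirical hypothesis is idle there, and the ``non-antagonism'' you obtain is this generic first-order-gain/second-order-cost property. You also need a sentence showing that the combined optimiser converges to $\omega^\star_\lambda$ rather than elsewhere: finite-horizon closeness to the \textsc{LeWM} trajectory by Gronwall, then local strong convexity of the perturbed loss near $\omega^\star$.

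The genuine gap is your second step. You establish $\langle g,h\rangle\ge0$ only at the points $\omega(\tau)$ of the \textsc{LeWM} trajectory. Once $\lambda_{\mathrm{str}}>0$ the combined flow leaves that trajectory, and the hypothesis says nothing about the sign of $\langle g,h\rangle$ at the points it visits. So ``the combined flow stays in the cone where both objectives improve'' does not follow. Your first bullet survives without sign information wherever $\|g\|>\lambda_{\mathrm{str}}\|h\|$ (Cauchy--Schwarz). The second bullet genuinely needs the sign, and Gronwall only yields $\langle g,h\rangle\ge-O(\lambda_{\mathrm{str}})$ near the old path, which does not fix it. What you can honestly claim is pointwise: at each point of the \textsc{LeWM} path, the combined direction improves both objectives to first order.

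There are two further caveats. First, the equivalence $\dot s\ge0\iff\langle\nabla S,-\nabla\mathcal{L}_{\mathrm{LeWM}}\rangle\ge0$ presumes full-batch gradient flow. With an adaptive diagonal preconditioner $P$, as in AdamW, you only get $\langle h,Pg\rangle\ge0$, so state gradient flow as a modelling assumption. Second, the Euler identity concerns $\nabla_{\z}S$ on the stacked latent trajectory, not your parameter-space $h$. A step along $h=J^\top\nabla_{\z}S$, with $J$ the Jacobian of the latents in $\omega$, moves the latents by $JJ^\top\nabla_{\z}S$, which can have a radial part. Even in latent space, the identity only fixes the total second moment to first order. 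It does not stop the gradient from redistributing variance across directions or creating correlations, which is exactly what SIGReg penalises. So ``cannot fight the variance part of SIGReg'' is too strong. The paper's own A4 ablation points the other way: straightening on the full $\z$ costs about $14$ points on Push-T, and the paper attributes this to overlap with SIGReg.
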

Empirically this is the weakest of our claims, but it rests on the reported emergence from previous works. It motivates treating straightening as a warm-start regularizer rather than a hard constraint.

\subsection{Identifiability of the progression axis}
Under the canonical $(P, Q)$ split with $k = 2$, the angular coordinate $\theta_t = \mathrm{atan2}(\zp_2, \zp_1)$ is identifiable up to a global rotation (the triplet loss enforces a cyclic ordering on the unit-circle but does not pin a unique reference direction). This rotation freedom is harmless when the planning cost itself is rotation-invariant (the canonical $\zc$-MSE cost is, by construction).

The identifiability is local to episodes: episodes with different durations or solving modes can each present a coherent angular progression in $\zp$ without any global reference direction shared across them.

\subsection{When does the subspace split fail?}
The construction above assumes the progression subspace $\zp$ is a reasonable $k$-dimensional summary of the episode phase. This assumption fails when:
\begin{enumerate}[leftmargin=*]
\item The environment lacks coherent episodic temporal structure (e.g.\ purely exploratory rollouts with no consistent task progression). In that case the triplet loss has no useful ordering signal and the split degenerates to the LeWM baseline.
\item The task has multiple branching progressions (e.g.\ different goals per episode with no shared notion of completion). The triplet loss can still impose local orderings within each branch, but a single $k$-dimensional progression manifold may not accommodate the branching cleanly; goal-conditioned variants are a natural extension.
\item $k$ is too large: the progression subspace absorbs content information, polluting the $(\theta, r)$ reading. We sweep $k \in \{2, 4, 8\}$ as an ablation.
\end{enumerate}

\section{Implementation notes}
\label{app:impl-extended}

We build on the public reference implementation of \textsc{LeWM} \citep{maes2026lewm} for the encoder, predictor, action-conditioning, and SIGReg modules. SD-JEPA additions are localised to (a) computing the canonical injections $P, Q$ on the projected latent, (b) instantiating the cosine-margin triplet on $\zp$, and (c) the optional polar-conditioning / angular-planning-cost paths described in App.~\ref{app:ablations}. The triplet uses within-batch sampling: for each anchor, a positive is drawn from the same trajectory within a temporal window $\vartheta_t$ and a negative from outside that window or from a different trajectory in the same batch, so triplet computation adds negligible overhead to the standard JEPA training step.

\section{Hyperparameter summary}
\label{app:impl}

\begin{table}[h]
\centering
\small
\begin{tabular}{lp{0.72\linewidth}}
\toprule
Rung name        & Meaning \\
\midrule
A0               & \textsc{LeWM} baseline. No split, SIGReg on the full latent, no triplet. \\
A2 (canonical)   & The proposed minimal SD-JEPA configuration: subspace split with $k_{\mathrm{prog}}{=}2$, SIGReg restricted to $\zc$, cosine triplet on $\zp$, no other auxiliary terms. The optimal $k_{\mathrm{prog}}$ varies per environment ($k\in\{2, 4, 8\}$); we use ``A2 ($k{=}\!\cdot$)'' for the broader sweep. \\
A2\_full         & Falsifier of the split. $k_{\mathrm{prog}}{=}0$ (no split), SIGReg and triplet both act on the full $\z$. \\
A2\_split\_full  & Mis-targeted falsifier. Split present ($k_{\mathrm{prog}}{=}2$) but the triplet is applied on the full $\z$ instead of $\zp$. \\
\bottomrule
\end{tabular}
\caption{Ablation rung legend used throughout \S\ref{sec:experiments}. A2 and the $k_{\mathrm{prog}}$ sweep are reported in the body; the falsifiers appear in \S\ref{sec:subspace-falsifier}. Auxiliary-term rungs (A4 / A5 / A6, adding straightening, polar predictor conditioning, and the angular planning cost respectively) are defined and reported in App.~\ref{app:ablations}.}
\label{tab:rung-legend}
\end{table}

Per-environment configs, dataset paths, hyperparameter sweeps, and training-time stop conditions are documented in the released code. The base hyperparameters: AdamW, lr $5\!\times\!10^{-5}$, weight decay $10^{-3}$, cosine-annealed over the 10 training epochs, batch size $128$, image resolution $224 \times 224$, ViT-tiny encoder (patch size $14$), $6$-layer transformer predictor with AdaLN-zero action conditioning, embedding dimension $192$. The frame-skip is $5$ for all environments. Default loss weights: $\lambda_S = 0.09$ (SIGReg), $\lambda_T = 0.10$ (triplet). Optional auxiliary weights ($\lambda_{\text{str}}$ for straightening, $\gamma_\theta, \delta_r$ for the angular planning cost) default to $0$; see App.~\ref{app:ablations} for the A4 / A5 / A6 sweep.

\section{Additional auxiliary-term explorations}
\label{app:ablations}

While exploring the design space of SD-JEPA we tested several optional auxiliary terms that, in our setup, do not improve the canonical A2 configuration. We document them here for completeness and as a reference for follow-up work. (i)~\textbf{A4} adds the temporal-straightening regulariser $\mathcal{L}_{\text{straight}} = -\E[\cos(\vv_t, \vv_{t+1})]$ on the full latent. On Push-T this rung underperforms A2 by roughly $14$ points: applying the straightening cost to the full $\z$ overlaps with SIGReg's isotropic-Gaussian requirement on $\zc$, an instance of the double-counting that Prop.~\ref{prop:noduplication} predicts. The same regulariser is the central contribution of \citet{wang2026straightening}, who use it under stop-gradient anti-collapse where the conflict does not arise (Sec.~\ref{sec:related}). (ii)~\textbf{A5} adds polar $(\sin\theta, \cos\theta, r)$ conditioning of the predictor's AdaLN on top of A4 and yields a further ${\sim}2$-point drop, suggesting that exposing the angular readout to the predictor at training time over-specialises the content channel without compensating gains. (iii)~\textbf{A6} additionally engages the angular planning cost $\gamma_\theta\,(1 - \cos(\hat\theta - \theta_g)) + \delta_r\,(\hat r - r_g)^2$ on $\zp$ at evaluation time. We also tested an auxiliary first-anchor regulariser pulling the genuine first-of-episode frames in each batch toward a fixed reference direction in $\zp$, as an attempt to globally calibrate $\theta_t$ across episodes; this rung returns to the canonical A2 baseline and supports re-engaging the angular planning cost without regression but does not improve over A2 in our setup. We document these explorations to show that the canonical SD-JEPA architecture (A2) does not require any of these extra regularisers, and that adding them tends to interfere with rather than complement the disjoint-supports decomposition.

Additional environments can be further explored such as WorldGym \cite{quevedo2506worldgym}, richer learning approaches directly in the latent \cite{grill2020bootstrap}, or totally different modalities such as in neuroscience \cite{dong2024brain} to observe if we can link such progression metrics to isolate distinct phenomena. 

\section{Extended latent geometry across environments}
\label{app:latent-geometry}

This appendix collects the per-environment latent-geometry figures complementing \S\ref{sec:latent-diagnostics}. For each environment we ran the latent-compass analysis on the trained checkpoint indicated, on a small set of held-out episodes spanning the dataset's full episode-id range. Per checkpoint we report (i) the t-SNE projections of $\zp$ and $\zc$, (ii) the trajectory-overlay heatmap of $\theta_t$ in raw state space, (iii) per-episode signed-Spearman heatmaps against candidate task-progress proxies, and (iv) one frame strip per env exhibiting an interesting non-monotone or multi-phase $\theta$ behaviour.

\begin{table}[h]
\centering
\small
\begin{tabular}{lccc}
\toprule
Run                                     & $|\rho|$ vs.\ clock & best non-clock proxy ($|\rho|$)         & clock vs.\ task           \\
\midrule
Push-T A2 ($k_{\mathrm{prog}}{=}2$)      & \textbf{0.91}       & \texttt{block\_angle\_err} (0.81)       & clock dominates          \\
Push-T A2 ($k_{\mathrm{prog}}{=}8$)      & 0.56                & \texttt{block\_angle\_err} (0.57)       & tied                     \\
Reacher ($k_{\mathrm{prog}}{=}8$)         & 0.53                & \texttt{ee\_target\_dist} (0.49)        & tied                     \\
Two-Room ($k_{\mathrm{prog}}{=}8$)        & 0.50                & \texttt{agent\_target\_dist} (0.34)     & clock leads\,$^\dagger$  \\
\textbf{Cube ($k_{\mathrm{prog}}{=}8$)}   & 0.47                & \textbf{\texttt{block\_target\_dist} (0.59)} & \textbf{task wins}  \\
\bottomrule
\end{tabular}
\caption{Per-environment mean $|\rho|$ between $\theta_t$ and the elapsed-time clock vs.\ the strongest non-clock proxy. $^\dagger$ Two-Room's expert demos include out-and-back trajectories where $\theta$ traces a U-shape; Spearman is monotonic and underestimates non-monotonic phase coupling (Fig.~\ref{fig:tworoom-frame-strip}). Cube is the only environment where a task-physical signal beats the clock.}
\label{tab:theta-corr}
\end{table}

\subsection{Push-T at $k_{\mathrm{prog}}{=}2$ (canonical A2)}

\begin{figure}[h]
\centering
\begin{minipage}[b]{0.36\linewidth}
\centering
\includegraphics[width=\linewidth]{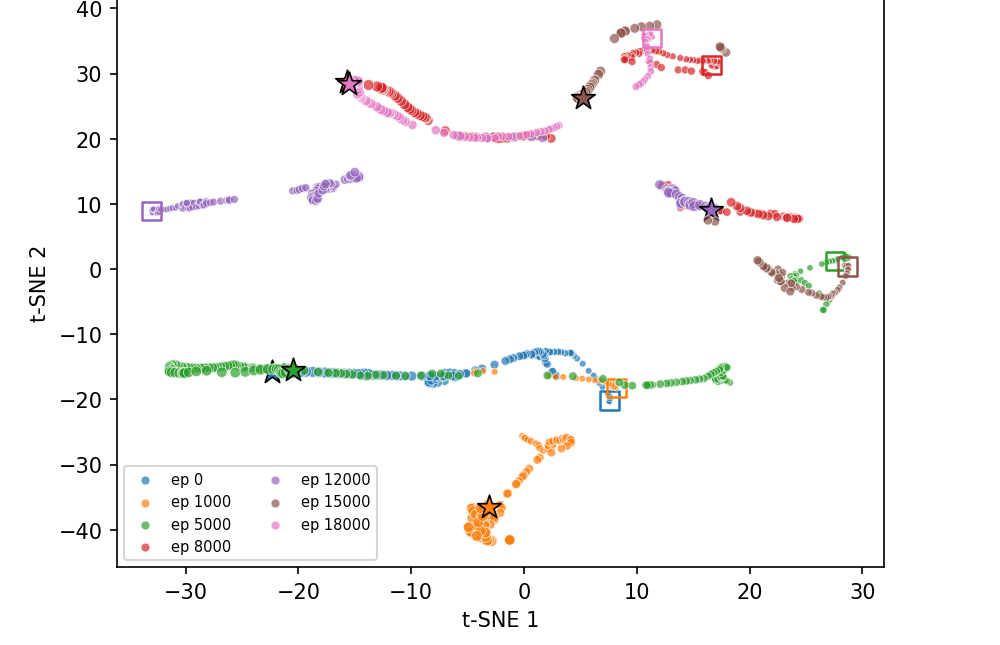}
\subcaption{$\zp$: per-episode arcs partially overlap, indicating shared cross-episode progression structure.}
\label{fig:tsne_zprog}
\end{minipage}
\hfill
\begin{minipage}[b]{0.36\linewidth}
\centering
\includegraphics[width=\linewidth]{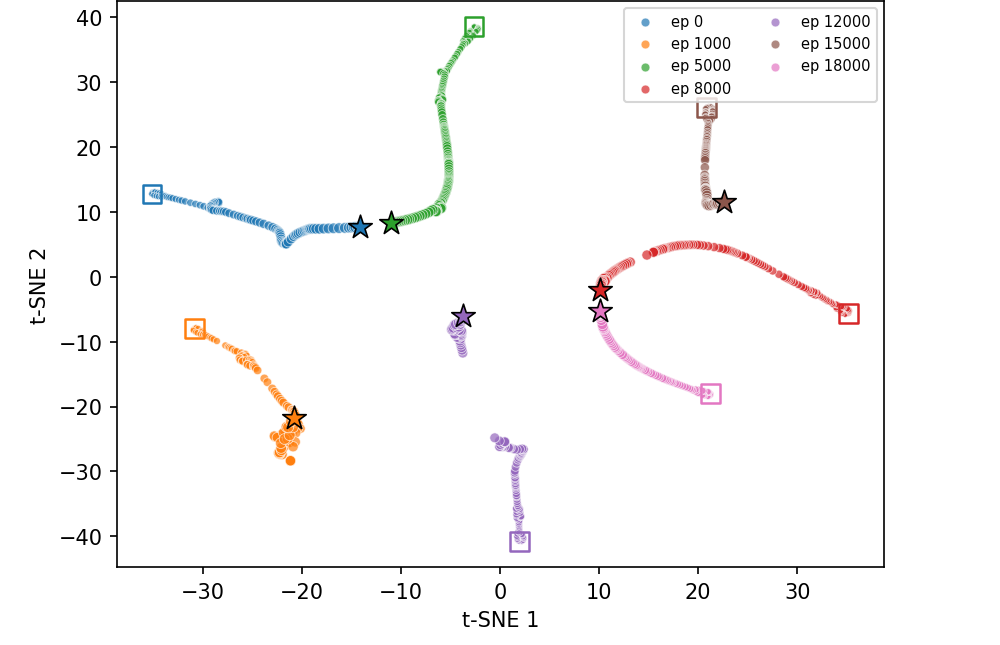}
\subcaption{$\zc$: each episode forms its own distinct cluster, indicating episode-specific content.}
\label{fig:tsne_zcont}
\end{minipage}
\hfill
\begin{minipage}[b]{0.22\linewidth}
\centering
\includegraphics[width=\linewidth]{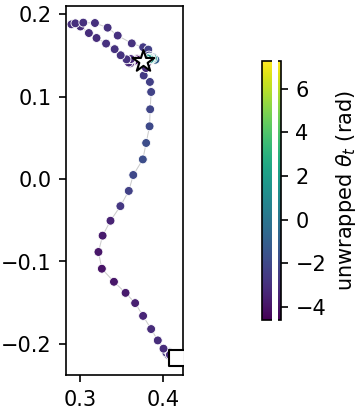}
\subcaption{Cube ep.\ 1000 ($k{=}8$, seed 42): planar trajectory coloured by $\theta_t$.}
\label{fig:cube-state-traj}
\end{minipage}
\caption{Latent geometry of the trained Push-T A2 (a, b; $k_{\mathrm{prog}}{=}2$, seed 3072, epoch 10, seven held-out episodes; squares: start, stars: end) and a cube state-traj overlay (c). \textbf{(a, b)} The disjoint-supports prediction of Prop.~\ref{prop:disjoint} -- cross-episode mixing on the progression side, per-episode separation on the content side -- is visible in both panels. \textbf{(c)} Cube is the only env where $\theta_t$ correlates more strongly with a task-physical signal (\texttt{block\_target\_dist}, $|\rho|{=}0.59$) than with the elapsed-time clock ($0.47$); the angular gradient along the planar path makes the compass visible.}
\label{fig:latent-geometry-body}
\end{figure}

The body's t-SNE pair (Fig.~\ref{fig:latent-geometry-body}) is from this checkpoint. The Push-T $r$-coordinate stays in roughly $[0.90, 1.57]$ with per-episode std up to $0.49$, $\theta$-span around $3.3$ rad. Per-episode signed Spearman against five candidate proxies (Fig.~\ref{fig:theta-corr-pusht}) shows that $|\rho|$ with \texttt{step\_idx} is consistently very high ($\approx 0.86$--$1.00$) on all but one episode (ep 8000, the regression case), with \texttt{block\_angle\_err} and \texttt{block\_goal\_dist} as close runners-up.

\begin{figure}[h]
\centering
\includegraphics[width=0.7\linewidth]{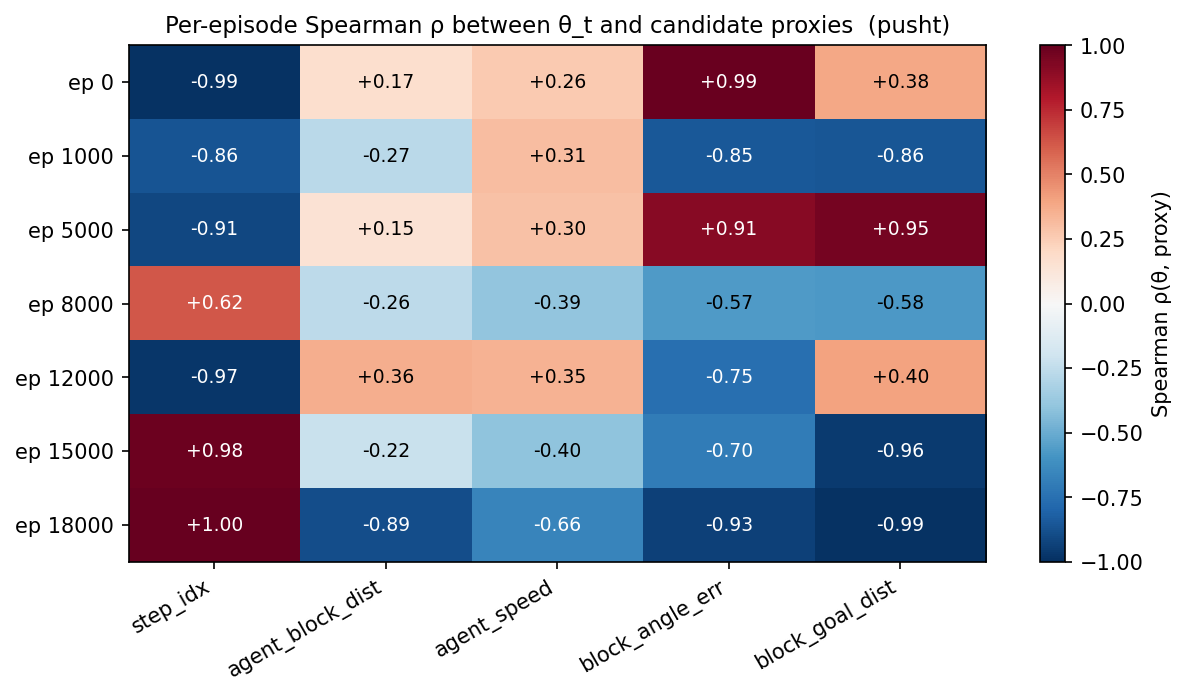}
\caption{Per-episode signed Spearman $\rho$ between $\theta_t$ and each candidate proxy on Push-T A2 ($k_{\mathrm{prog}}{=}2$). Red $=$ positive, blue $=$ negative; sign tracks the orientation of the angular trajectory in that episode. Episode 8000 (the regression case where the agent reverses) inverts the clock correlation.}
\label{fig:theta-corr-pusht}
\end{figure}

\begin{table}[h]
\centering
\small
\begin{tabular}{rccc}
\toprule
$t$ & $\overline{\cos}(z[t])$ & $\overline{\cos}(\zp[t])$ & $\overline{\cos}(\zc[t])$ \\
\midrule
5  & $0.014$  & \textbf{0.612} & $0.009$ \\
25 & $-0.016$ & \textbf{0.521} & $-0.021$ \\
50 & $-0.016$ & \textbf{0.371} & $-0.019$ \\
\bottomrule
\end{tabular}
\caption{Cross-episode mean cosine similarity at fixed $t$ on a Push-T A2 checkpoint ($k_{\mathrm{prog}}{=}2$ canonical checkpoint), seven held-out episodes spanning the dataset's episode-id range. Confirms the disjoint-supports asymmetry (Prop.~\ref{prop:disjoint}) at the second-moment level; the t-SNE in Fig.~\ref{fig:latent-geometry-body} (main text) shows the same pattern qualitatively.}
\label{tab:cosine-diagnostic}
\end{table}

\begin{figure}[h]
\centering
\includegraphics[width=0.85\linewidth]{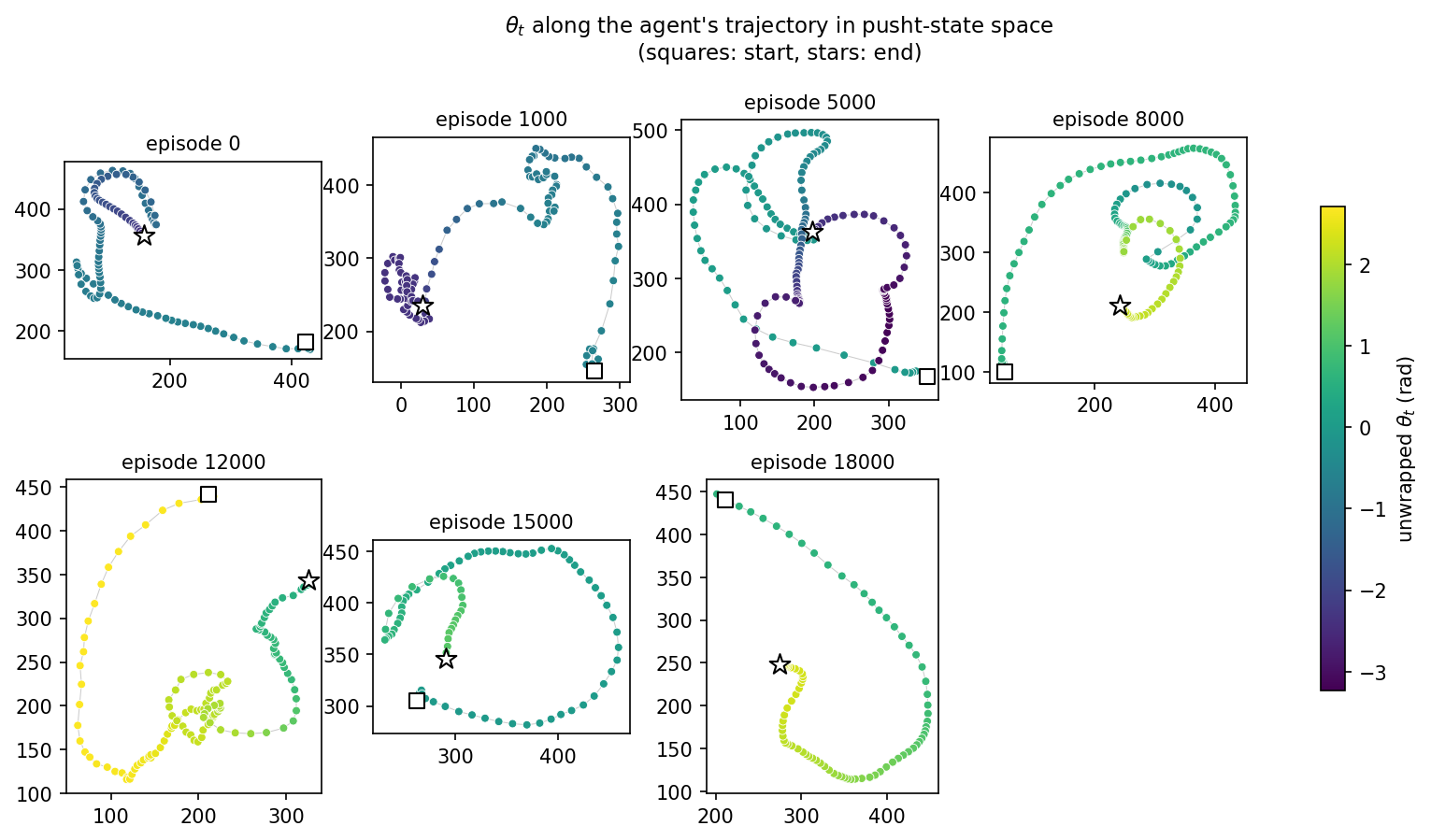}
\caption{Push-T agent trajectories in raw state space (agent $(x, y)$ in pixel-equivalent units), seven held-out episodes coloured by $\theta_t$. The angular gradient broadly tracks the agent's progression along its physical path, with notable deviations on the cyclic-motion episodes.}
\label{fig:state-traj-pusht}
\end{figure}

\subsection{Push-T at $k_{\mathrm{prog}}{=}8$}

The $k=8$ checkpoint shows a substantially cleaner $S^1$ progression manifold: $r \in [0.62, 0.77]$ with per-episode std $\in [0.04, 0.07]$, angular span $6.1$ rad (effectively full-circle coverage). The clock advantage seen at $k=2$ also narrows: mean $|\rho|$ vs.\ \texttt{step\_idx} drops from $0.91$ to $0.56$, with \texttt{block\_angle\_err} and \texttt{block\_goal\_dist} reaching $|\rho|\approx 0.56$ as well. At $k=8$ $\theta$ is less time-like and more cyclic-phase-like.

\subsection{Reacher at $k_{\mathrm{prog}}{=}8$}

\begin{figure}[h]
\centering
\includegraphics[width=0.85\linewidth]{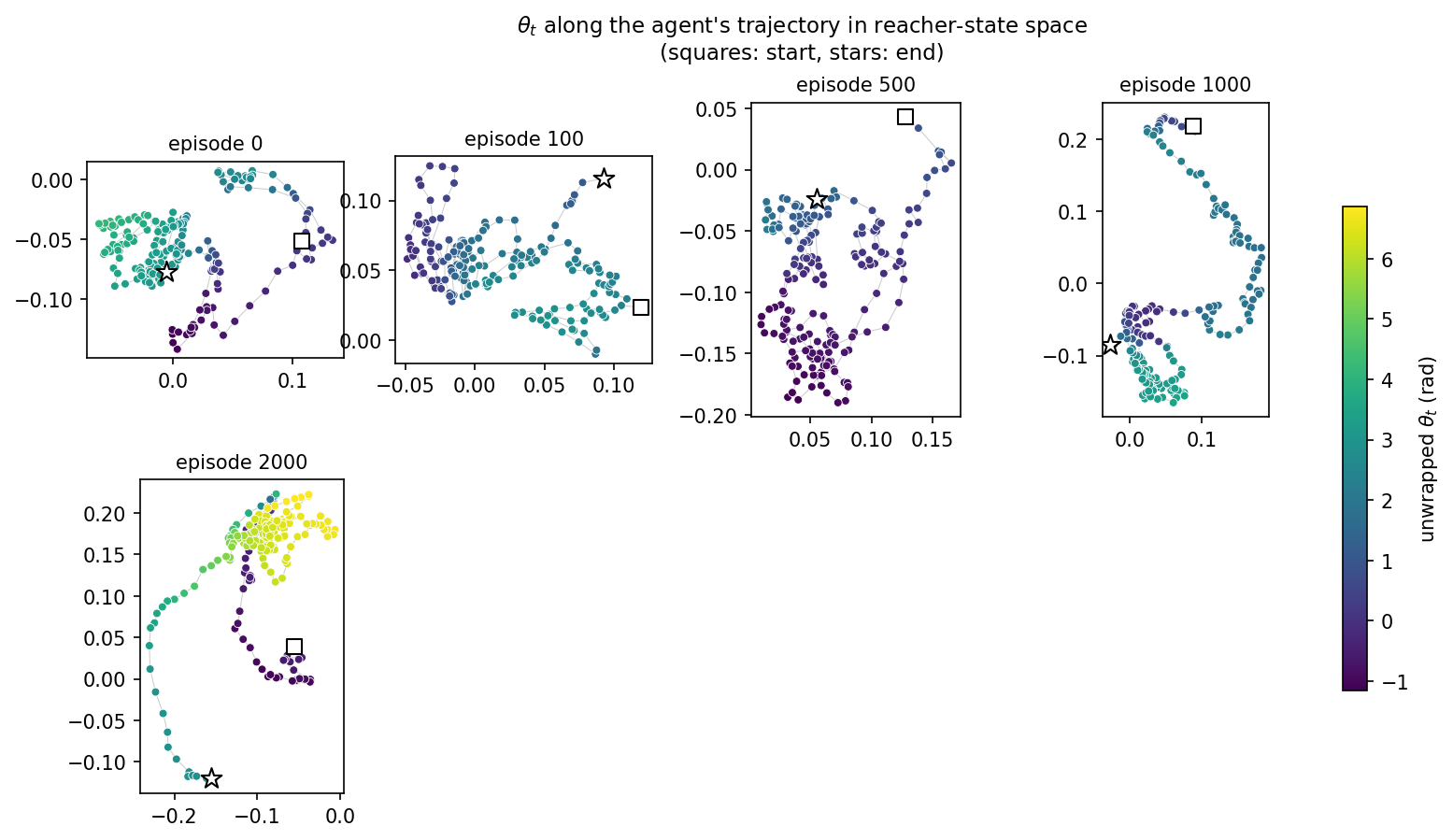}
\caption{Reacher end-effector trajectories (computed from \texttt{finger\_pos}) for five held-out episodes, coloured by $\theta_t$. Two episodes (500 and 1000) show very strong individual coupling between $\theta$ and the end-effector-to-target distance ($|\rho|=0.92$ and $0.79$), the cleanest phase-coordinate cases in the cohort.}
\label{fig:state-traj-reacher}
\end{figure}

Mean $|\rho|$ on Reacher is $0.53$ for the clock and $0.49$ for \texttt{ee\_target\_dist}; on individual episodes the latter \emph{beats} the clock. \texttt{joint\_speed} is essentially uncorrelated with $\theta$ ($|\rho|\approx 0.08$): $\theta$ is not encoding ``how fast is the arm moving.'' The $+3.3$\,pp full-z planning lift on Reacher (Tab.~\ref{tab:cross-env-kprog}) is consistent with the angular-cost-on-$\zp$ adding usable information when the latent's $\theta$ is well-coupled to the end-effector-to-goal geometry.

\subsection{Two-Room at $k_{\mathrm{prog}}{=}8$, the cyclic-phase example}

\begin{figure}[h]
\centering
\includegraphics[width=0.85\linewidth]{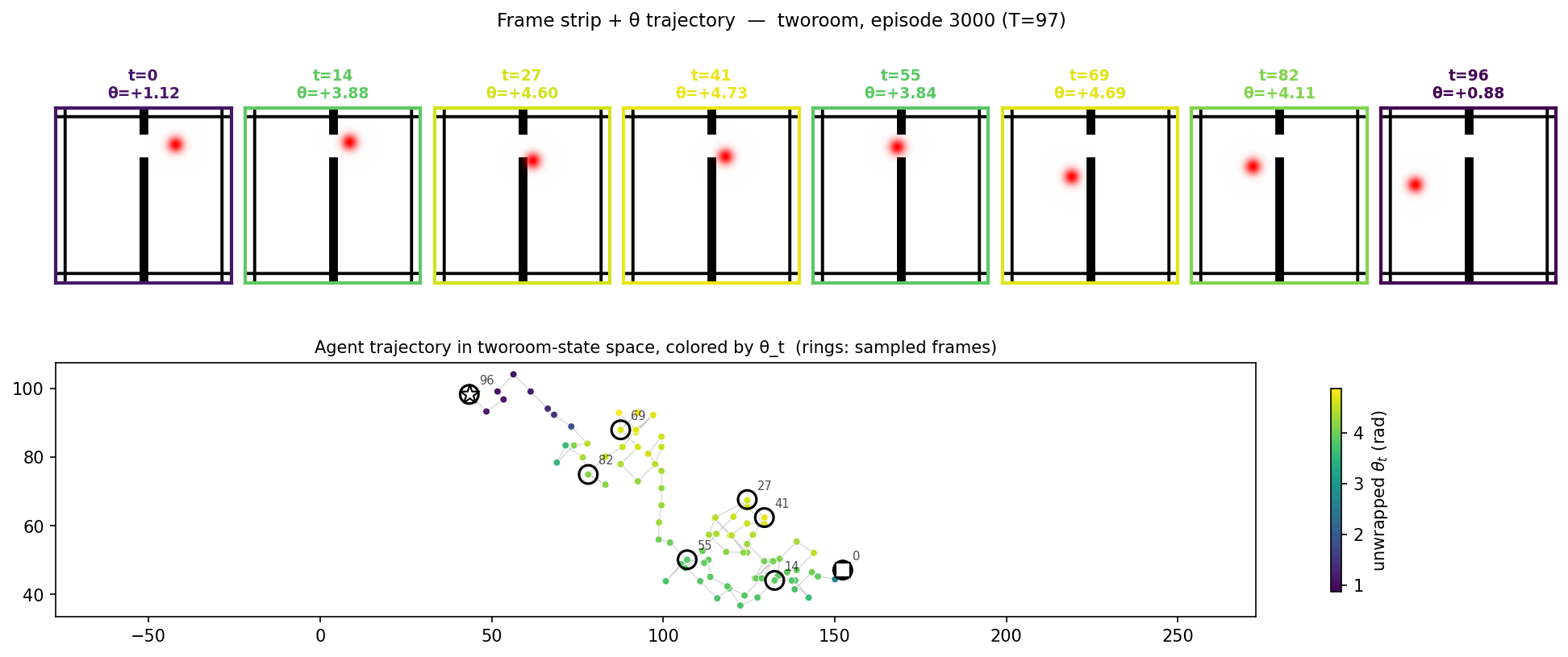}
\caption{Two-Room episode 3000: the agent crosses from the right room into the left, \emph{then comes back}. $\theta_t$ traces a U-shape ($1.12 \to 4.73 \to 0.88$); Spearman returns $\rho \approx 0$ for that episode even though $\theta$ is visibly tracking the agent's spatial phase. This is the cleanest illustration in our suite of where Spearman lies about cyclic phase coordinates.}
\label{fig:tworoom-frame-strip}
\end{figure}

The mean Two-Room $|\rho|$ is $0.50$ for the clock and $0.34$ for \texttt{agent\_target\_dist}, but the per-episode picture is bimodal: ep 1000 is clock-like ($\rho=+0.88$), ep 7000 is fully task-coupled ($\rho=-0.90$), ep 3000 (Fig.~\ref{fig:tworoom-frame-strip}) is the cyclic case where Spearman fails. Mutual information or circular correlation would tighten these estimates; we leave that to follow-up work.

\subsection{OGBench-Cube at $k_{\mathrm{prog}}{=}8$}

OGBench-Cube is the only environment in our suite where a task-physical signal beats the elapsed-time clock in correlation with $\theta_t$ (mean $|\rho|$ with \texttt{block\_target\_dist} is $0.59$ vs $0.47$ with \texttt{step\_idx}). It is also the environment with the most varied per-episode behaviour: spatial-distance encoding for reaches, gripper-state encoding for picks, and U-cycle encoding for U-shaped paths. The frame strip on episode 500 (Fig.~\ref{fig:cube-frame-strip}) and the 3D t-SNE projections (Fig.~\ref{fig:cube-tsne-3d}) capture this variability.

\begin{figure}[h]
\centering
\includegraphics[width=0.85\linewidth]{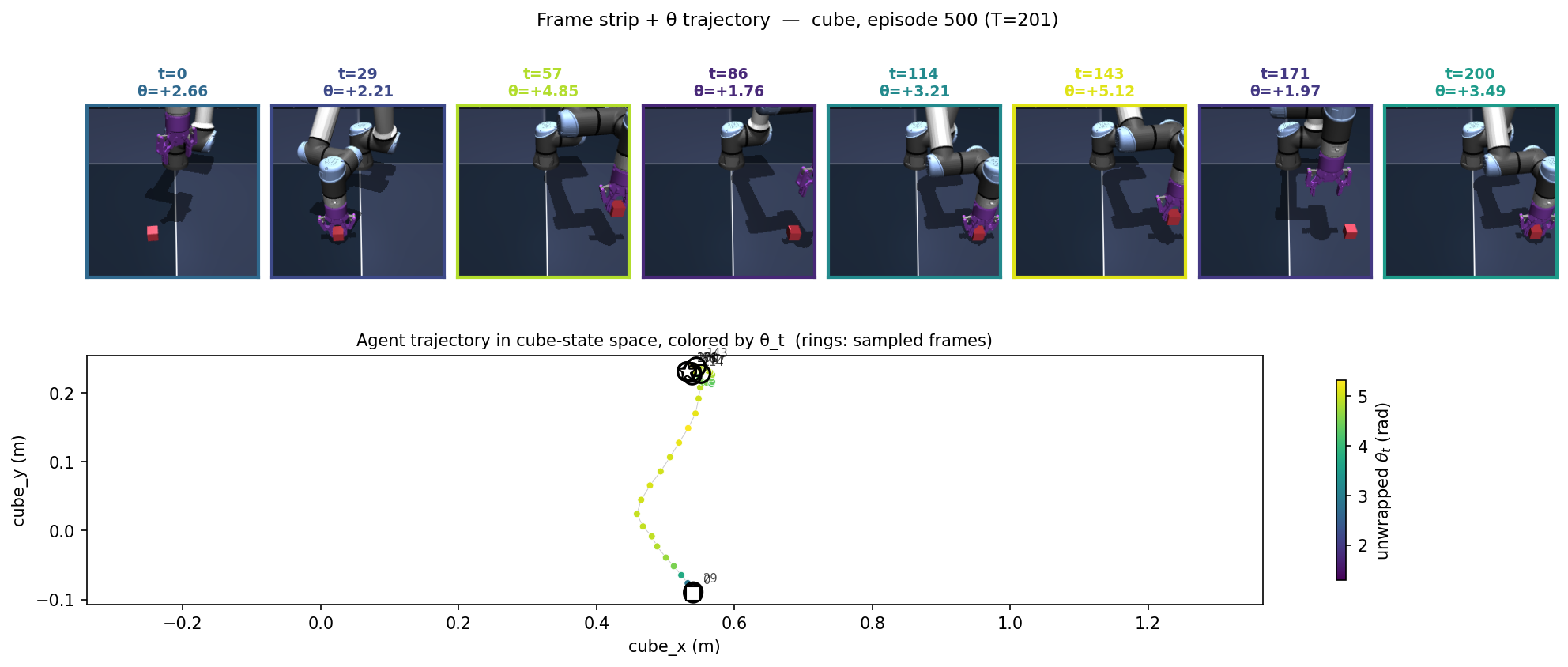}
\caption{OGBench-Cube episode 500 (a multi-phase pick): $\theta_t$ traces a non-monotone trajectory ($2.46 \to 4.65 \to 1.78 \to 3.21 \to 1.13 \to 1.97 \to 3.49$) tracking the gripper-contact / gripper-opening signals ($|\rho| = 0.76$ and $0.79$). Different cube episodes expose $\theta$ encoding \emph{different} physical phases per episode (spatial distance for reaches, gripper state for picks, U-cycle phase for tracebacks), the strongest single piece of evidence that $\theta$ is an adaptive task-phase coordinate rather than a wrapped clock.}
\label{fig:cube-frame-strip}
\end{figure}

\begin{figure}[h]
\centering
\begin{minipage}{0.49\linewidth}
\centering
\includegraphics[width=\linewidth]{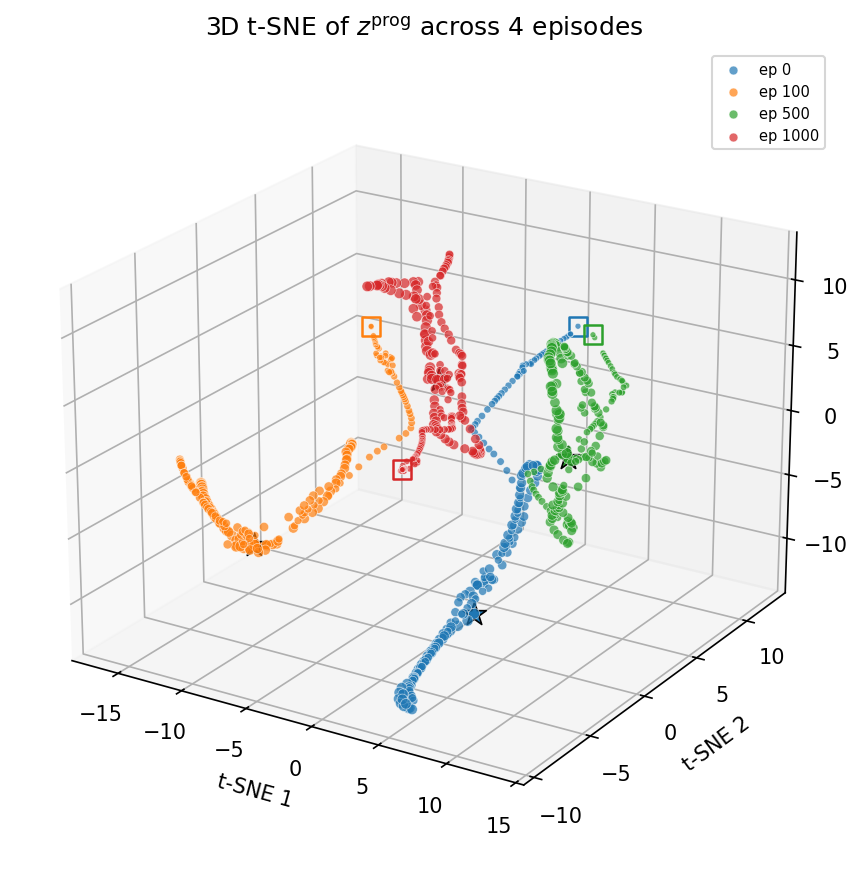}
\subcaption{$\zp$, 3D t-SNE: each episode forms a compact branch in a distinct direction.}
\end{minipage}
\hfill
\begin{minipage}{0.49\linewidth}
\centering
\includegraphics[width=\linewidth]{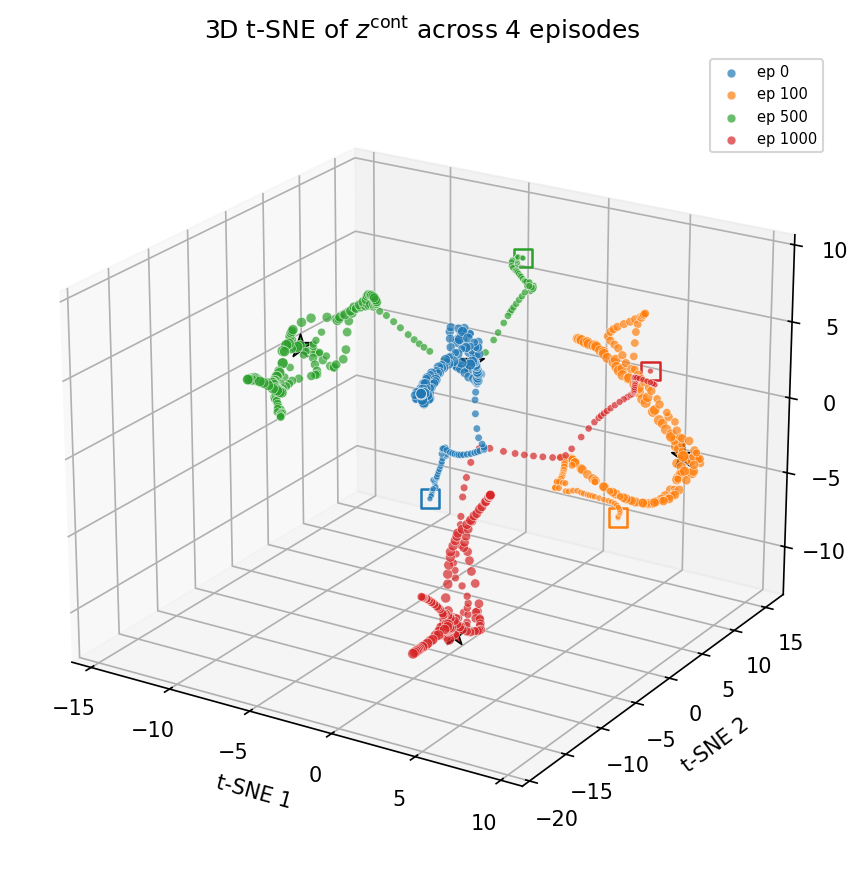}
\subcaption{$\zc$, 3D t-SNE: cleaner per-episode separation than $\zp$, even more pronounced than in 2D.}
\end{minipage}
\caption{3D t-SNE projections of the OGBench-Cube SD-JEPA latent at $k_{\mathrm{prog}}{=}8$, four held-out episodes. The third dimension makes the per-episode structure substantially more legible; the same disjoint-supports asymmetry observed on Push-T (Fig.~\ref{fig:latent-geometry-body}) holds here.}
\label{fig:cube-tsne-3d}
\end{figure}

\section{Supplementary perturbation analyses}
\label{app:perturb}

\paragraph{Brief background on head-direction cells.} For readers unfamiliar with the neuroscience analogy invoked in \S\ref{sec:discussion}: head-direction (HD) cells are neurons in the rodent hippocampal-entorhinal system that encode the animal's facing direction in the world, independent of location \citep{taube1990hd, knierim2020hippocampus}. Each cell fires preferentially when the head is oriented in a specific allocentric direction; together they form a 1-D circular code where a population peak travels around a circle as the animal turns. Two properties make the parallel with $\theta_t$ natural: (i)~a scalar circular coordinate (a 360$^\circ$ heading is collapsed to one moving peak), and (ii)~re-mapping under perturbation, when the environment is teleported, rotated, or otherwise restructured, HD-cell populations re-anchor their peak to the new visual cues and then track coherently from there, the same kind of structural re-localisation we observe in $\theta_t$ after our teleport-and-continue perturbation. We do not claim our latent rediscovered the biological circuit; the parallel is qualitative, two systems converging on a similar geometric solution (a single circular phase coordinate $+$ re-mapping under sensory restructuring) for what is broadly the same problem.

The main text headlines the teleport-and-continue perturbation
(Figure~\ref{fig:teleport-continue}), in which the
``moment'' (latent surprise spike) and the ``meaning''
(post-perturbation relocalisation in $\theta$) come apart cleanly.
Three complementary perturbation modes confirm and refine that
picture. All run on Push-T A2 (canonical, $k_{\text{prog}} = 2$,
seed $3072$, epoch 10), processed by the same encoder with no
test-time fine-tuning.

\paragraph{Splice (Figure~\ref{fig:surprise_splice}).}
We replace all frames of episode $0$ from step $50$ onward with frames
from episode $1000$. Unlike the headline teleport-and-continue ---
which uses an episode-pair chosen to maximise the visual contrast in
solving mode, the splice picks an arbitrary substitute scene.
Result: a $\approx 0.17$ rad surprise spike at the boundary
($\sim 8\times$ baseline mean), followed by sustained elevated noise
across steps $50$--$70$ as the encoder processes the unfamiliar
trajectory. The post-perturbation $\theta_t$ diverges from the clean
baseline but does not pin to a single new sector; instead it traces a
new path consistent with the new episode's progression. This is a
weaker but still positive surprise signal, the magnitude depends on
how angularly distinguishable the donor episode is from the original.

\begin{figure}[h]
\centering
\includegraphics[width=0.85\linewidth]{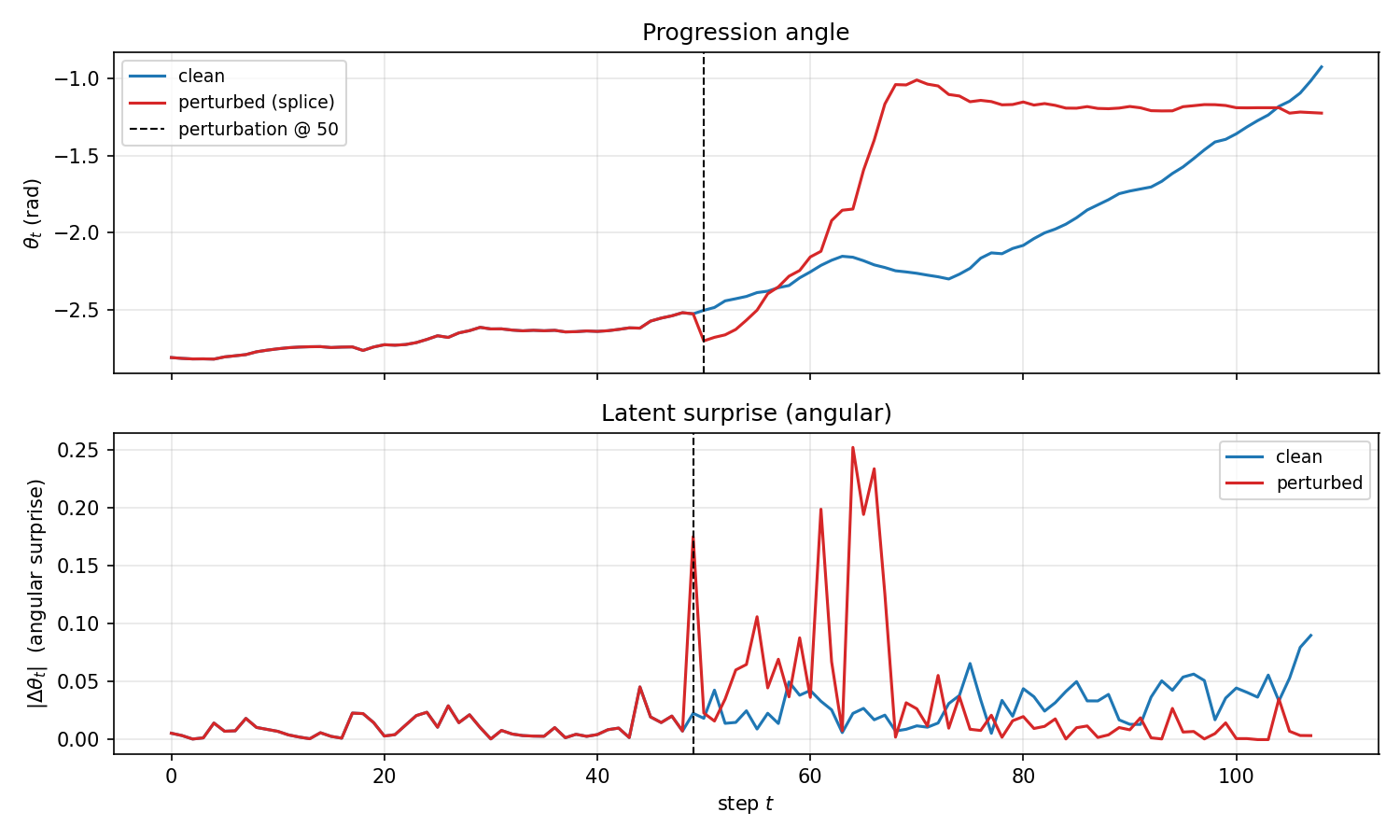}
\caption{Splice perturbation on Push-T A2. Episode $0$'s frames are
replaced with episode $1000$'s frames from step $50$ onward. The
spike magnitude ($\approx 0.17$ rad, $\sim 8\times$ baseline) is more
modest than in the teleport-and-continue case
(Fig.~\ref{fig:teleport-continue}) because the donor episode
was not selected for maximum angular contrast.}
\label{fig:surprise_splice}
\end{figure}

\paragraph{Single-frame teleport (Figure~\ref{fig:surprise_teleport}).}
We insert a single frame from a structurally different scene at step
$50$ of episode $0$, then resume episode $0$. The angular surprise
spikes to $\approx 0.20$ rad at the substituted frame and returns to
baseline within a single step, a textbook illustration of localised
surprise with immediate recovery. The model registers the anomaly,
reports a sharp $\theta$ jump, and resumes normal localisation as soon
as the genuine observation stream returns. This mode tests
\emph{recovery dynamics}: when the perturbation is transient,
relocalisation is also transient.

\begin{figure}[h]
\centering
\includegraphics[width=0.85\linewidth]{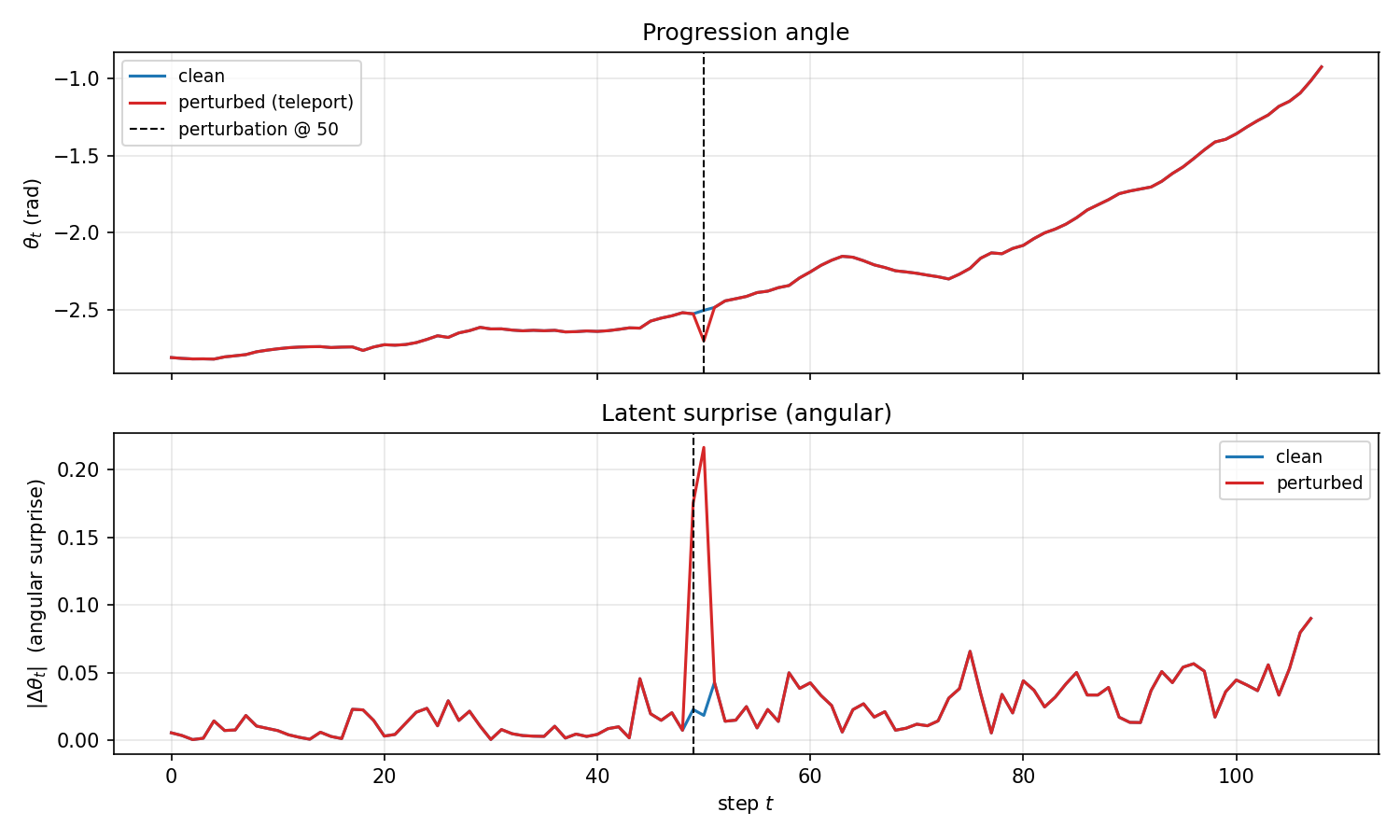}
\caption{Single-frame teleport perturbation on Push-T A2. A single
anomalous frame is inserted at step $50$, followed by the original
episode resuming. The angular surprise spike at the perturbation
returns to baseline within one step, the cleanest illustration of
$\theta$ as a transient surprise-sensitive readout that snaps back
when the observation stream is restored.}
\label{fig:surprise_teleport}
\end{figure}

\paragraph{Reverse-window perturbation (Figure~\ref{fig:surprise_reverse}).}
We reverse a 10-frame window of episode $0$ starting at step $40$.
Critically, the set of frames presented to the encoder is unchanged ---
only the temporal order is altered. Any angular discontinuity in this
mode must therefore be a model-dynamics phenomenon, not a scene-novelty
artifact. The result: matched spikes of $\approx 0.11$ rad at the
window's leading boundary (step $40$) and $\approx 0.13$ rad at the
trailing boundary (step $50$), with elevated noise within the reversed
window. This rules out the alternative hypothesis that the surprise
spikes in the splice and teleport modes are simply registering ``a
novel scene appeared''; they are registering ``the dynamics violated
the model's local expectations'', which is the stronger and more
interesting claim.

\begin{figure}[h]
\centering
\includegraphics[width=0.85\linewidth]{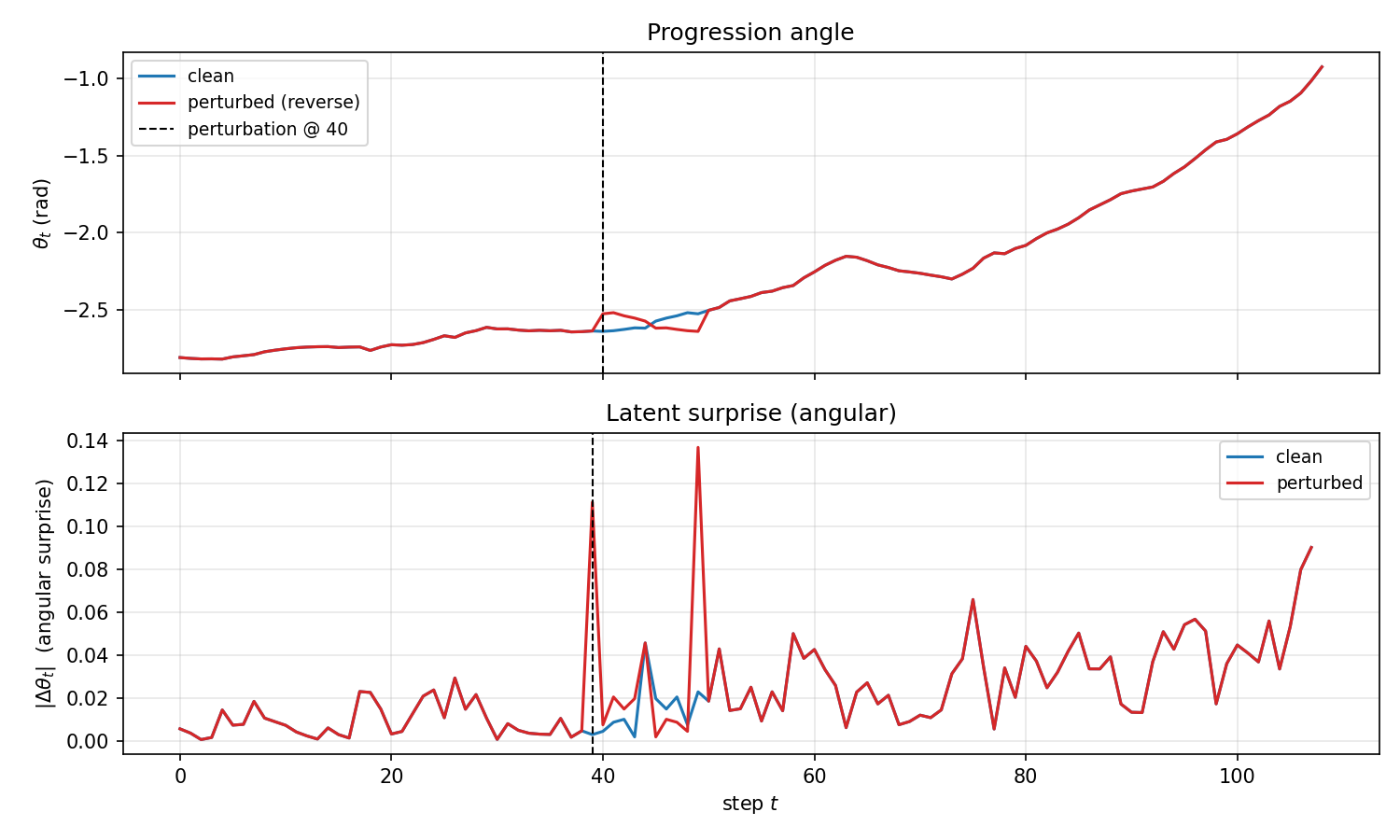}
\caption{Reverse-window perturbation on Push-T A2. We reverse 10 frames
of episode $0$ starting at step $40$; the set of pixels presented is
unchanged, only their temporal order. Matched angular surprise spikes
appear at \emph{both} boundaries of the reversed window
(${\sim}0.11$~rad at step 40, ${\sim}0.13$~rad at step 50), confirming
that the surprise signal is a response to dynamics violation rather
than scene novelty. The clean (blue) baseline is mostly hidden behind
the red perturbed curve since the two coincide outside the reversed
window.}
\label{fig:surprise_reverse}
\end{figure}

\paragraph{Animated GIF for talks / online supplementary.}
The same teleport-and-continue experiment renders as an animated GIF
via \texttt{analysis/surprise\_voe.py --gif --mode teleport\_continue}.
The animation marks the perturbation frame with a red border +
``TELEPORT'' overlay text and switches the on-screen $\theta_t$ trace
from blue (pre-perturbation) to red (post-perturbation), making the
relocalisation visible in real time. The static
Figure~\ref{fig:teleport-continue} carries the same
information for the paper.

\section{Phase-event localisation vs.\ classical surprise: extended results}
\label{app:phase-align}

This appendix collects the full empirical evidence for the comparison between $|\Delta\theta_t|$ and z-MSE as surprise signals on OGBench-Cube (\S\ref{sec:latent-diagnostics}, Tab.~\ref{tab:phase-auroc}, Fig.~\ref{fig:phase-overlay-cube}). The headline finding is that the two metrics measure complementary things: $|\Delta\theta_t|$ is the right tool for semantic-event localisation, z-MSE is the right tool for raw magnitude-anomaly detection, and combining them does not improve detection on either test.

\subsection{Phase-alignment summary across all 40 episodes}

\begin{figure}[h]
\centering
\includegraphics[width=0.98\linewidth]{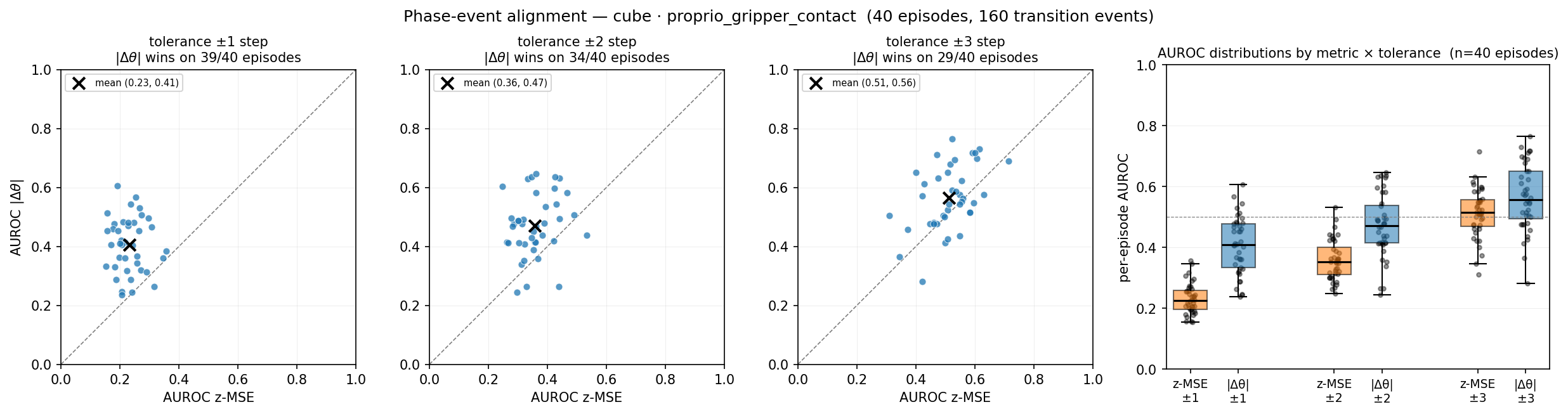}
\caption{Phase-event-alignment AUROC summary on OGBench-Cube (40 held-out episodes, 160 ground-truth gripper-contact events). The first three panels are per-episode AUROC scatter plots (one dot per episode, $x =$ AUROC of z-MSE, $y =$ AUROC of $|\Delta\theta|$) at tolerances $\pm 1, \pm 2, \pm 3$ steps; the diagonal $y = x$ marks parity. The fourth panel is a side-by-side box-and-strip plot of the per-episode AUROC distributions for each metric and tolerance. At every tolerance the $|\Delta\theta|$ cluster sits above the diagonal; the box plot shows the IQRs barely overlap.}
\label{fig:phase-align-summary}
\end{figure}

Per-episode AUROC distribution at $\pm 2$-step tolerance:

\begin{center}
\small
\begin{tabular}{lccc}
\toprule
metric             & median   & mean     & IQR              \\
\midrule
$|\Delta\theta|$   & $0.472$  & $0.470$  & $[0.415, 0.544]$ \\
z-MSE              & $0.353$  & $0.358$  & $[0.312, 0.421]$ \\
\bottomrule
\end{tabular}
\end{center}

Both AUROC distributions sit below $0.5$ at the tight $\pm 1$-step tolerance: z-MSE peaks during the stable-contact phase (when the cube is being translated while gripped) rather than at the transitions themselves, and only ${\sim}11\%$ of all steps are positive labels. $|\Delta\theta|$'s peaks are consistently \emph{closer} to transitions even when not exactly aligned, so widening the tolerance window catches more of those near-aligned peaks and pushes $|\Delta\theta|$ above $0.5$ before z-MSE follows. The right reading is the gap, not the absolute level.

\subsection{Action-corruption AUROC: the magnitude-anomaly negative control}

To verify that the two metrics genuinely measure different things, we run a complementary test where the corruption is a magnitude anomaly rather than a semantic event. We replace a contiguous $25\%$ band of each episode's actions with a random shuffle and ask each metric to localise the corrupted band as a per-step random-segment AUROC.

\begin{center}
\small
\begin{tabular}{llccc}
\toprule
env       & corruption     & z-MSE              & $|\Delta\theta|$ & combined (rank-sum) \\
\midrule
Reacher   & action shuffle & $\mathbf{0.771}$   & $0.657$          & $0.736$             \\
Reacher   & other-episode  & $\mathbf{0.808}$   & $0.663$          & $0.763$             \\
Push-T    & action shuffle & $\mathbf{0.598}$   & $0.496$          & $0.553$             \\
Cube      & action shuffle & $\mathbf{0.696}$   & $0.659$          & $0.694$             \\
\bottomrule
\end{tabular}
\end{center}

z-MSE wins this test by construction: the action-shuffle injects a magnitude anomaly that a $192$-dimensional MSE detects more sensitively than a $1$-dimensional phase scalar; the per-step random-segment label rewards information bandwidth, not semantic alignment. The combined rank-sum score does not beat z-MSE on any cell; the two metrics are not strictly orthogonal but are clearly answering different questions.

\subsection{Regime-change segmentation (change-point detection)}

A second formulation of the same hypothesis: at semantic phase events, $\theta_t$ exhibits a regime shift (different mode of evolution before vs.\ after) while z-MSE just spikes and returns to baseline. We run BinSeg with $K = 4$ requested change points on the $40$ cube episodes and compute precision/recall/F1 against the ground-truth contact transitions with greedy 1-to-1 matching at tolerance $\pm$tol.

\begin{center}
\small
\begin{tabular}{lccc}
\toprule
metric                                    & F1 $\pm 1$         & F1 $\pm 2$         & F1 $\pm 3$         \\
\midrule
$|d\theta/dt|$ (observed angular vel.)    & $0.394$            & $0.613$            & $\mathbf{0.812}$   \\
z-MSE                                     & $\mathbf{0.481}$   & $\mathbf{0.769}$   & $0.775$            \\
$|\Delta\theta|$ (prediction error)       & $0.425$            & $0.725$            & $0.744$            \\
\bottomrule
\end{tabular}
\end{center}

\begin{figure}[h]
\centering
\includegraphics[width=0.85\linewidth]{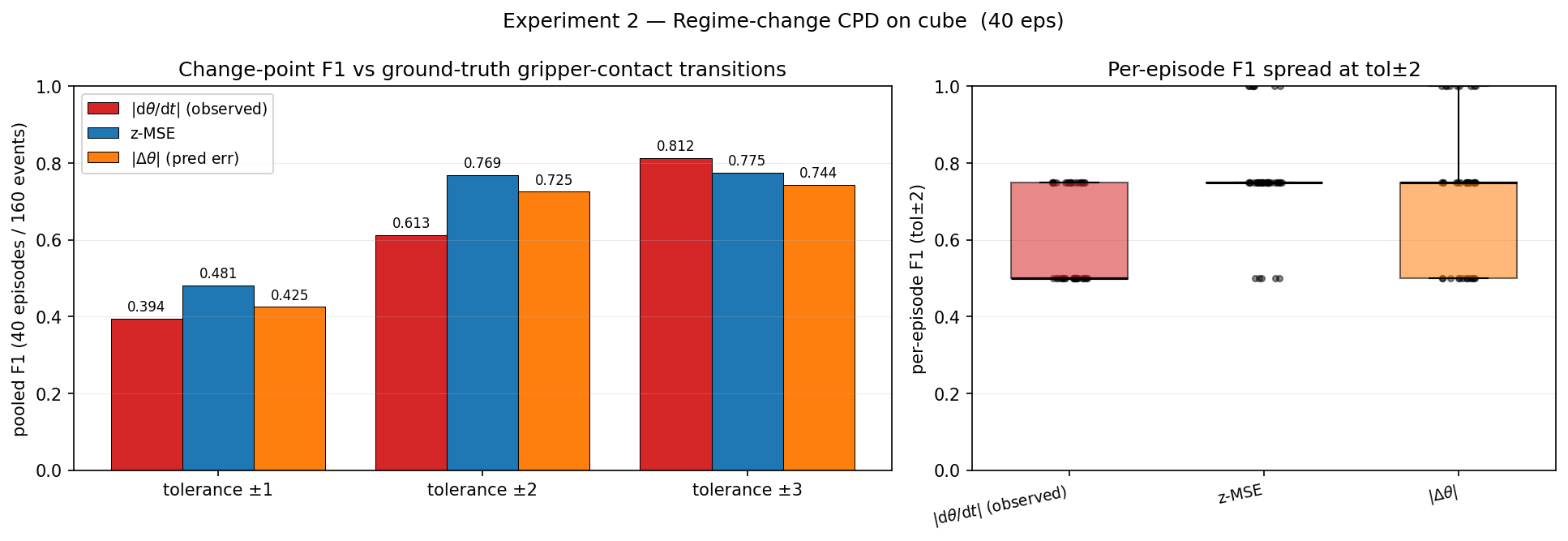}
\caption{Change-point F1 summary across 40 cube episodes for the three signals at three tolerances. At loose tolerance ($\pm 3$), the \emph{observed} angular-velocity trace $|d\theta/dt|$ beats both prediction-error metrics, supporting the reading that the $\theta$ trajectory itself is naturally segmented by the task. At tighter tolerance, BinSeg locates spike-and-return boundaries (z-MSE) more precisely than smoother regime boundaries.}
\label{fig:regime-change-summary}
\end{figure}

The regime-change test is a partial vindication: at loose tolerance the observed angular-velocity trace beats both prediction-error metrics, but at tight tolerance z-MSE's spike-and-return pattern produces sharper change-point boundaries that BinSeg locates more accurately. This is consistent with the AUROC reading: $|\Delta\theta|$ excels at peak alignment with semantic events, z-MSE excels at sharp magnitude-spike segmentation; both are useful and the choice depends on what the practitioner wants to detect.

\subsection{Linear-probe information density: full per-feature table}

\begin{figure}[h]
\centering
\includegraphics[width=0.98\linewidth]{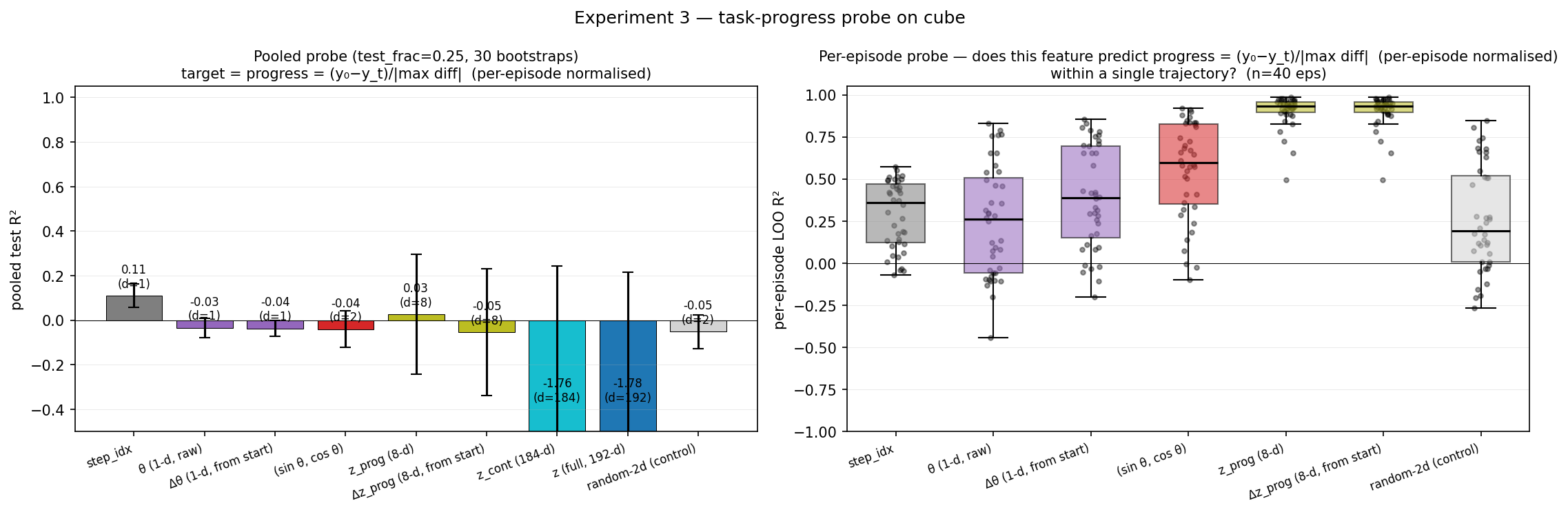}
\caption{Linear-probe R$^2$ on cube task progress (cube-to-target distance, per-episode normalised). Left: pooled probe across episodes; only \texttt{step\_idx} is positive, the high-dim features overfit catastrophically. Right: within-episode probe (leave-one-step-out CV per episode, mean over $40$ episodes); $\zp$ ($8$-d) reaches mean R$^2 = 0.91$ and is positive on $100\%$ of episodes; $(\sin\theta, \cos\theta)$ packs $55.5\%$ of the variance into 2 dimensions, well above the random-2-d control.}
\label{fig:probe}
\end{figure}

We fit Ridge probes from each latent feature to a per-episode-normalised cube-to-target progress target. The pooled regime leaves $25\%$ of episodes out and fits on the rest ($30$ bootstraps); the per-episode regime fits within each of $40$ held-out episodes with leave-one-step-out CV. The body of the paper (Fig.~\ref{fig:probe}) shows the headline pooled-vs-per-episode contrast; the full per-feature table is below.

\begin{center}
\small
\begin{tabular}{lcccc}
\toprule
feature                              & dim & pooled R$^2$         & per-episode mean R$^2$  & frac per-ep R$^2 > 0$ \\
\midrule
\texttt{step\_idx} (clock baseline)  & $1$   & $\mathbf{+0.110}$    & $+0.291$                 & $90\%$               \\
$\theta$ (raw, $1$-d)                & $1$   & $-0.034$             & $+0.243$                 & $67\%$               \\
$\Delta\theta$ (from start, $1$-d)   & $1$   & $-0.037$             & $+0.392$                 & $85\%$               \\
$(\sin\theta, \cos\theta)$           & $2$   & $-0.040$             & $\mathbf{+0.555}$        & $92.5\%$             \\
$\zp$                                & $8$   & $+0.026$             & $\mathbf{+0.905}$        & $\mathbf{100\%}$     \\
$\Delta\zp$ (from start)             & $8$   & $-0.054$             & $+0.905$                 & $100\%$              \\
$\zc$                                & $184$ & $-1.762$ (overfit)   & ---                      & ---                  \\
$\z$ (full)                          & $192$ & $-1.777$ (overfit)   & ---                      & ---                  \\
random $2$-d projection (control)    & $2$   & $-0.052$             & $+0.263$                 & $77.5\%$             \\
\bottomrule
\end{tabular}
\end{center}

The interpretation in the body covers the headline. The pooled probe makes one further point clearly: the compass is a \emph{per-trajectory} phase coordinate, not a globally calibrated absolute-distance estimate. The two regimes ask different questions, and the answers are consistent with the design intent ($\zp$ encodes within-episode progression, while episode-specific scene content lives in $\zc$).

\begin{figure}[h]
\centering
\includegraphics[width=0.98\linewidth]{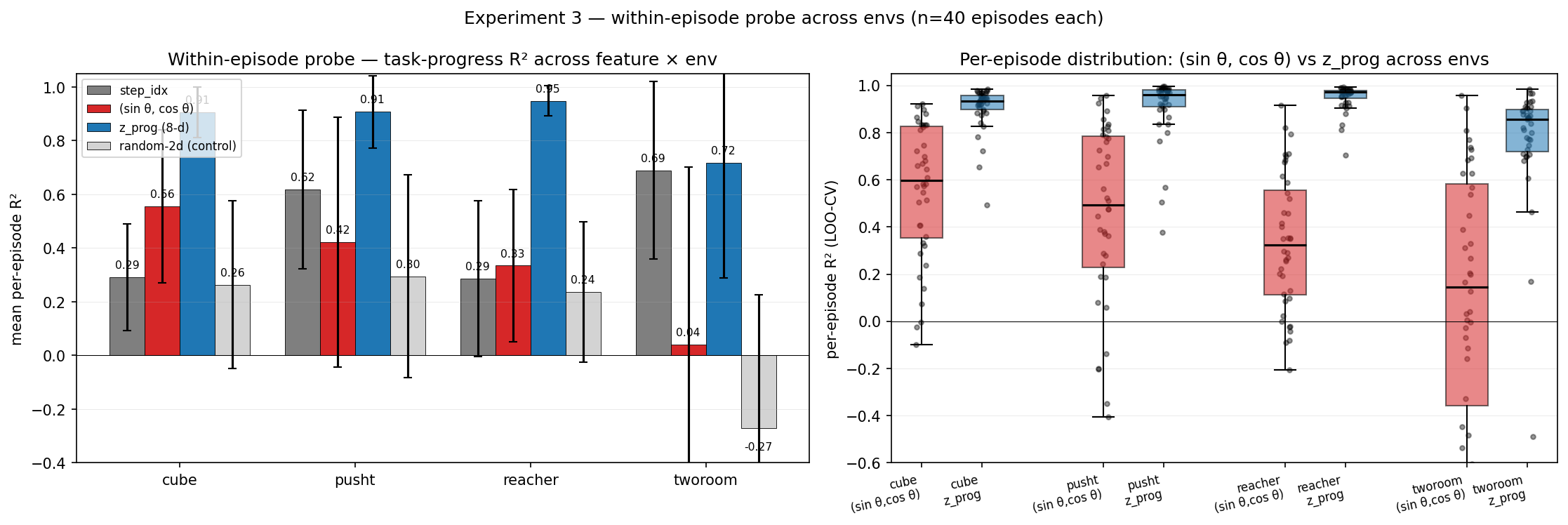}
\caption{Cross-environment per-episode probe ($40$ episodes per env, LOO-CV) against the env-specific natural target-distance signal. Left: mean R$^2$ across feature $\times$ env; $\zp$ ($8$-d, blue) wins everywhere using only $4.2\%$ of the latent dim, with the largest probe-vs-clock gap on Reacher. Right: per-episode R$^2$ distribution, $(\sin\theta, \cos\theta)$ vs $\zp$ across the four envs. Headline numbers in body Tab.~\ref{tab:probe-crossenv}.}
\label{fig:probe-crossenv}
\end{figure}

\begin{table}[h]
\centering
\small
\begin{tabular}{lccc}
\toprule
                       & median   & mean     & IQR              \\
\midrule
$\zp$                  & $0.932$  & $0.905$  & $[0.85, 0.98]$   \\
$(\sin\theta,\cos\theta)$ & $0.596$  & $0.555$  & $[0.46, 0.69]$   \\
\texttt{step\_idx}     & $0.360$  & $0.291$  & $[0.18, 0.45]$   \\
\bottomrule
\end{tabular}
\caption{Per-episode R$^2$ distribution for the headline features (40 cube episodes, leave-one-step-out CV). $\zp$'s IQR sits well above the clock's, with no overlap.}
\label{tab:probe}
\end{table}




\end{document}